\setlist[enumerate]{nosep}
\setlist[itemize]{nosep}
\renewcommand{\appendixtocname}{Appendix Contents.}
\let\oldappendix\appendices
\renewcommand{\appendices}{%
  \clearpage
  \renewcommand{\thesection}{\Roman{section}}
  \let\tf@toc\tf@app
  \addtocontents{app}{\protect\setcounter{tocdepth}{2}}
  \immediate\write\@auxout{%
    \string\let\string\tf@toc\string\tf@app^^J
  }
  \oldappendix
}%
\newcommand{\listofappendices}{%
  \begingroup
  \renewcommand{\contentsname}{\appendixtocname}
  \let\@oldstarttoc\@starttoc
  \def\@starttoc##1{\@oldstarttoc{app}}
  \tableofcontents
  \endgroup
}
\theoremstyle{plain}
\newtheorem{theorem}{Theorem}[section]
\newtheorem{lemma}[theorem]{Lemma}
\newtheorem{corollary}[theorem]{Corollary}
\theoremstyle{definition}
\newtheorem{definition}[theorem]{Definition}
\newtheorem{remark}[theorem]{Remark}
\newcommand{\bbE}{\mathbb{E}}
\newcommand{\calK}{\mathcal{K}}
\newcommand{\calR}{\mathcal{R}}
\newcommand{\calY}{\mathcal{Y}}
\newcommand{\calX}{\mathcal{X}}
\newcommand{\calZ}{\mathcal{Z}}
\newcommand{\calS}{\mathcal{S}}
\newcommand{\calB}{\mathcal{B}}
\newcommand{\calG}{\mathcal{G}}
\newcommand{\calU}{\mathcal{U}}
\DeclareMathOperator{\argmin}{argmin}
\DeclareMathOperator{\softmax}{softmax}
\DeclareMathOperator{\logsumexp}{logsumexp}
\DeclareMathOperator{\Risk}{Risk}
\DeclareMathOperator{\err}{err}
\providecommand{\diff}{\,\mathrm{d}}
\newcommand{\Ncal}{N_{\mathrm{cal}}}
\newcommand{\calD}{\mathcal{D}}
\newcommand{\calL}{\mathcal{L}}
\newcommand{\bbR}{\mathbb{R}}
\title{Rethinking Early Stopping: Refine, Then Calibrate}
\date{}
\author[1,2]{Eugène Berta\thanks{\texttt{eugene.berta@inria.fr}}}
\author[1,2]{David Holzmüller\thanks{\texttt{david.holzmuller@inria.fr}}}
\author[1,3]{Michael I. Jordan}
\author[1,2]{Francis Bach}
\affil[1]{INRIA, Paris}
\affil[2]{Ecole Normale Supérieure, PSL Research University, Paris}
\affil[3]{University of California, Berkeley}
\begin{document}
\maketitle

\begin{abstract}
Machine learning classifiers often produce probabilistic predictions that are critical for accurate and interpretable decision-making in various domains.
The quality of these predictions is generally evaluated with proper losses, such as cross-entropy, which decompose into two components: \emph{calibration error} assesses general under/overconfidence, while \emph{refinement error} measures the ability to distinguish different classes.
In this paper, we present a novel variational formulation of the calibration-refinement decomposition that sheds new light on post-hoc calibration, and enables rapid estimation of the different terms.
Equipped with this new perspective, we provide theoretical and empirical evidence that calibration and refinement errors are not minimized simultaneously during training.
Selecting the best epoch based on validation loss thus leads to a compromise point that is suboptimal for both terms.
To address this, we propose minimizing refinement error only during training (\emph{Refine,...}), before minimizing calibration error post hoc, using standard techniques (\emph{...then Calibrate}).
Our method integrates seamlessly with any classifier and consistently improves performance across diverse classification tasks.
\end{abstract}

\keywords{Classification \and Early stopping \and Calibration \and Proper Scoring Rules}

\section{Introduction}
Accurate classification lies at the heart of many machine learning applications, from medical diagnosis and fraud detection to autonomous driving and weather forecasting.
Modern classifiers predict not only a class label but also a probability vector reflecting the model's confidence that the instance belongs to each class.
These probabilistic outputs are essential for informed downstream decision-making, especially in high-stakes scenarios where uncertainty must be explicitly accounted for.
Despite their widespread use, many machine learning models, especially complex models such as neural networks, tend to produce poorly calibrated probabilities \citep{guo2017}, meaning that the predicted scores do not align with real-world probabilities, leading to overconfident or underconfident predictions that can undermine reliability and trustworthiness.

This issue has mostly been considered through the lens of post-hoc calibration, which consists in adjusting the predicted probabilities after training, using a small held-out portion of the dataset. An overlooked aspect of the problem is that the calibration error contributes to the total loss of a classifier \citep{brocker2009}. In \Cref{figure:TrainingCurve}, we plot the decomposition into calibration error and refinement error of the classification loss (here cross-entropy) on the validation set during training for a deep neural network. We observe that the two errors are not minimized simultaneously. The resulting loss minimizer thus has a non-zero calibration error and is suboptimal for refinement. This accords with a recent empirical observation of \citet{ranjan2024post}, who noted that selecting the training epoch at which to stop based on classification loss after post-hoc transformations yields large performance gains. In this paper, we provide a general theoretical understanding of this phenomenon by introducing a novel variational formulation of the calibration-refinement decomposition.  Our analysis reveals that there is room for improvement in the way we train probabilistic classifiers in general.

\begin{figure}[ht]
\begin{center}
\centerline{\includegraphics[width=0.5\textwidth]{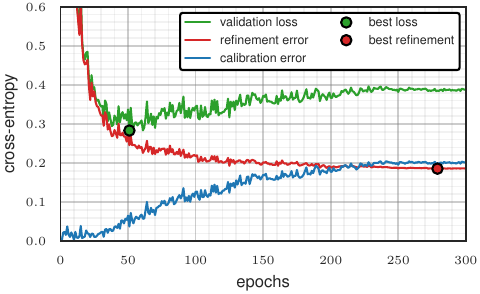}}
\caption{\textbf{Calibration-refinement decomposition during training.} We plot the calibration (blue curve) and refinement (red curve) terms of the validation cross-entropy loss (green curve)—measured with our estimator from \Cref{section:OurMethod}—for a deep neural network (ResNet-18) trained on a computer vision dataset (CIFAR-10). Calibration error and refinement error are not minimized simultaneously, the loss minimizer (green dot) is thus suboptimal for both. See \Cref{appendix:ComputerVision} for more details on the training procedure.}
\label{figure:TrainingCurve}
\end{center}
\end{figure}

In \Cref{section:LossDecomposition}, we introduce novel variational formulations of the different terms in the classical decompositions of proper losses (calibration, refinement, sharpness). We first show that the refinement error of a probabilistic classifier $f$ can be written variationally as  $\min_{g} \text{Risk}(g \circ f)$, where the minimum is taken over all measurable functions. Since risk is calibration plus refinement, we also have that calibration error is $\text{Risk}(f) - \min_{g} \text{Risk}(g \circ f)$.
This reformulation sheds light on post-hoc calibration in general, demonstrating that selecting the recalibration function $g^*$ that minimizes the empirical risk of $g \circ f$ in a class of post-hoc calibration functions $g \in \mathcal{G}$ is a principled way to remove the calibration error.
The resulting risk after recalibration is a simple refinement error estimator while the risk difference before and after recalibration is a calibration error estimator.
The contributions of calibration and refinement errors to the total classification loss can thus be estimated simply by fitting a recalibration function.

In \Cref{section:OurMethod} we propose using the loss after temperature scaling as a refinement estimator, which applies to both binary and multiclass classifiers.
To track calibration and refinement errors during training, our estimator requires fitting a recalibration function after each epoch.
We make this approach practical by developing an efficient implementation based on temperature scaling that yields an efficient proxy for minimization over all measurable functions.
Equipped with the ability to track calibration and refinement errors during training (as in \Cref{figure:TrainingCurve}), we observe empirically for a variety of machine learning models that these two terms are not minimized simultaneously. The traditional validation loss minimizer selected by early stopping therefore does not minimize either the calibration error or the refinement error.
To address this limitation of the standard training paradigm, we propose minimizing only the refinement error during training, preventing the variations in calibration error from contaminating our early stopping metric, and taking advantage of the fact that it can be minimized after training.

In \Cref{section:Experiments}, we show empirically that early stopping based on refinement error, as estimated by fitting a simple recalibration function after each iteration, yields smaller loss at test time across various architectures and data modalities.
Far from restricting classification performance, carefully dealing with refinement and calibration errors allows for smaller loss at test time, whatever the proper score of interest.

We also initiate the theoretical study of this phenomenon in Sections~\ref{section:GaussianDataModel}~and~\ref{section:HighDimensionalLogisticRegression}, which focuses in the simplified setting of high-dimensional regularized logistic regression.
In this setting, we demonstrate theoretically that calibration and refinement errors are minimized for different values of the regularization parameter. Thus, refinement-based early stopping provably achieves significant loss reduction by minimizing both refinement error (during training) and calibration error (post hoc).
Our mathematical model allows us to study the impact of problem parameters on calibration and refinement minimizers, and we uncover the existence of different regimes exhibiting differential rates of convergence for these two minimizers depending on the nature of the problem.

The estimated refinement error can also be used for tuning hyper-parameters, or as a stopping metric in any iterative machine learning algorithm, making our method broadly applicable.
To facilitate practical adoption, we make our refinement estimator available in the package \url{github.com/dholzmueller/probmetrics}.

\section{Related work}

\paragraph{Proper scoring rules.}
Proper scoring rules are fundamental tools for assessing the quality of probabilistic forecasts. For a comprehensive overview, see \citet{gneiting2007strictly}.
From a decision-theoretic perspective, proper scoring rules arise naturally: the loss incurred by choosing the Bayes optimal action under the predictive distribution is an instance of a proper scoring rule \citep{grunwald2004game}.
This framework enables the properization of scoring rules that are not inherently proper \citep{brehmer2020properization}.

\paragraph{Decompositions of proper scores.} 
For the Brier score \citep{brier1950verification}, the calibration-refinement decomposition~(\ref{equation:DecompositionBrocker}) was introduced by \citet{sanders1963subjective} and the calibration-sharpness decomposition (\ref{equation:DecompositionBrockerSharpness}) was introduced by \citet{murphy1973new}.
\citet{brocker2009} extended these ideas to general proper scoring rules.
\citet{kull2015novel} reformulated these decompositions purely in terms of divergences between different random variables and introduced additional components such as irreducible, epistemic, and grouping loss.
Alternative formulations have also been proposed by \citet{pohle2020murphy}.
However, none of these prior works provide a variational interpretation of such decompositions.

\paragraph{Calibration.}
Calibration has been the focus of renewed attention after \citet{guo2017} observed that neural networks often exhibit miscalibration.
They also noted that logloss tends to overfit earlier than accuracy during training.
Later studies \citep{minderer2021revisiting, tao2024a} emphasized that model calibration can vary significantly with architectural and training choices.
To address miscalibration, a wide range of post-hoc methods have been developed.
These include variants of logistic regression, such as Platt scaling \citep{platt1999probabilistic}, Temperature/Vector/Matrix scaling \citep{guo2017}, Beta calibration \citep{kull2017beta}, and Dirichlet calibration \citep{kull2019}.
Another family of methods stems from isotonic regression \citep{robertson_order_1988, zadrozny2002transforming, oron2017centered, vovk2015large, manokhin2017multi, berta2024classifier}.
Binning-based techniques also exist \citep[e.g.,][]{naeini2015}, though these are more commonly used for estimating calibration error than for recalibration itself.
\citet{wang2021} demonstrate that changing the training loss function can improve calibration before, but not after post-hoc calibration.

\paragraph{Post-hoc calibration and proper scores.} 
Several recent works relate post-hoc calibration to proper scoring rules.
\citet{gruber2022better} evaluate post-hoc methods using a quantity akin to loss, and \citet{gruber2024optimizing} propose a variational formulation for optimizing squared calibration error estimators.
\citet{dimitriadis2021stable} evaluate miscalibration and discrimination for binary classifiers using the Brier score after Isotonic Regression.
\citet{ranjan2024post} propose using the loss after post-hoc transformation for selecting the best epoch, without making the link with refinement error or proper scores.
\citet{ferrer2024evaluating} propose assessing calibration quality via the loss reduction achieved through recalibration.
The importance of optimizing the total loss and not only calibration error is emphasized, for example, by \citet{perez2023beyond}, \citet{chidambaram2024reassessing}, and \citet{ferrer2024evaluating}.
While our method employs a refinement estimator, \citet{perez2023beyond} propose a grouping loss estimator that could be applicable in our case, given that the irreducible loss component is independent of $f$. However, grouping loss is more challenging to estimate than refinement loss, since the former can also capture the irreducible risk $\inf_f \Risk(f)$.

\paragraph{High dimensional asymptotics of classification.} Our asymptotic analysis of classification and refinement errors for regularized logistic regression in high dimensions builds on a well-established line of work.
\citet{dobriban2018, MaiHighDimLR} study classification error under the same data model and provide asymptotic results that are directly relevant to our setting. Related scenarios have also been analyzed by \citet{couillet2018classification, salehi2019impact}.
More recently, \citet{bach2024high} provides a novel perspective on these high-dimensional results through the lens of self-induced regularization, which may help interpret our expressions for calibration and refinement errors.


\section{Calibration-Refinement decomposition} \label{section:LossDecomposition}

We begin with an overview of the classical decompositions of proper losses into calibration and refinement errors, or calibration, sharpness and uncertainty. We then present variational formulations of these decompositions that allow new  interpretations of the various terms. Notably, the calibration error measures how much the loss can be decreased by optimal post-processing and the refinement error measures the remaining loss after this optimal post-processing.

\textbf{Notation.} Consider classification with $k$ classes. Let $\Delta_k$ denote the probability simplex $\{ p \in [0,1]^k \; | \; \mathbf{1}^\top p = 1 \}$ and $\mathcal{Y}_k \subset \Delta_k$ the space of one-hot encoded labels $y \in \{0,1\}^k$ with $y_i = 1$ if the true class is $i$ and $y_i = 0$ otherwise. Assume we have a random variable $X \in \mathcal{X}$ (the feature vector), such that there is an unknown joint probability distribution $\mathcal{D}$ on $(X,Y)$. We make probabilistic predictions $p \in \Delta_k$ for the value of the true label $Y$ using a statistical model $f:\mathcal{X} \rightarrow \Delta_k$.

\subsection{Proper losses}
We evaluate predictions with a loss function $\ell:\Delta_k \times \mathcal{Y}_k \rightarrow \mathbb{R}_+$.
$\ell$ is a nonnegative function of $p$ and $y$ such that $\ell(p,y)$ assesses the quality of prediction $p$ for label $y$.
Well-known examples include the Brier score \citep{brier1950verification}, $\ell(p,y) = \|y-p\|_2^2$, and the cross-entropy (logloss) \citep{shannon1948mathematical}, $\ell(p,y) = -\sum_{i=1}^k y_i\log(p_i)$.

Suppose the label $Y$ follows a categorical distribution $q \in \Delta_k$, we denote $\ell(p,q)$ the expected loss $\mathbb{E}_{Y \sim q}[\ell(p,Y)]$ obtained with forecast~$p$.
A natural requirement for our loss function $\ell$ is that it should be minimized when the predicted distribution $p$ matches the true target distribution $q$. Losses that verify this property are called \textit{proper}.
Formally, we follow \cite{gneiting2007strictly} but reverse the sign such that proper losses should be minimized instead of maximized:
\begin{definition}[Proper loss function]
    We consider losses $\ell: \Delta_k \times \calY_k \to \bbR \cup \{\infty\}$ such that for any $p \in \Delta_k$, $\ell(p, \cdot)$ is measurable and quasi-integrable with respect to any $q \in \Delta_k$. This allows to define $\ell(p, q) \coloneqq \mathbb{E}_{Y\sim q}[\ell(p,Y)]$, taking values in $\bbR \cup \{\infty\}$. If $\ell(q, q) \leq \ell(p,q)$ for all $p$ and $q$, $\ell$ is said to be \emph{proper}.
    If the inequality holds with equality only for $p = q$, $\ell$ is \emph{strictly proper}.
\end{definition}

The integrability conditions could be further relaxed to use convex subsets of $\Delta_k$, which we omit for simplicity.
Leaving aside the technical details, the important takeaway is that loss functions generally used in machine learning such as the Brier score and logloss are strictly proper. 

\subsection{Decompositions}
Let $d_\ell(p,q) \coloneqq \ell(p,q) - \ell(q,q)$ and $e_\ell(q) \coloneqq \ell(q,q)$ be the divergence and entropy associated with loss function~$\ell$ (we refer to these as $\ell$-divergence and $\ell$-entropy). For different choices of $\ell$, we recover well-known entropy and divergence functions.

{\centering
\begin{tabular}{ccc}
 \toprule
 Loss $\ell$ & Divergence $d_\ell$ & Entropy $e_\ell$  \\
 \midrule
 Logloss  & KL divergence & Shannon entropy \\
 $-\sum_i y_i \log(p_i)$  & $\sum_i q_i \log\frac{q_i}{p_i}$ & $-\sum_i q_i \log q_i$ \\ 
 \midrule
 Brier score & Squared distance & Gini index \\
 $\|y-p\|_2^2$ & $\|p-q\|_2^2$ & $\sum_i q_i(1-q_i)$ \\ 
 \bottomrule
\end{tabular}\par
}

Given a probabilistic classifier $f$, let $C \coloneqq \mathbb{E}_\mathcal{D}[Y|f(X)]$ denote the conditional expectation of $Y$ given the prediction $f(X)$.  We refer to values of $C$ as \emph{calibrated scores}.
The risk of $f$ is measured with a proper loss $\ell$ satisfies the following ``calibration-refinement decomposition'' \citep{brocker2009}:
\begin{equation}\label{equation:DecompositionBrocker}
    \Risk(f) = \bbE[\ell(f(X), Y)] = \underbrace{\bbE[d_\ell(f(X), C)]}_{\calK_\ell(f)} + \underbrace{\mathbb{E}[e_\ell(C)]}_{\calR_\ell(f)}.
\end{equation}

The first term in decomposition (\ref{equation:DecompositionBrocker}) is the \textit{calibration error} $\calK_\ell(f)$ associated with $\ell$. The second term, denoted $\calR_\ell(f)$, is the \textit{refinement error} for $\ell$.

The refinement term can be further decomposed into the \textit{uncertainty} $\calU_\ell(Y)$ of $Y$ (notice that it is independent of $f$) and a \textit{sharpness} term $\calS_\ell(f)$, yielding the ``calibration-sharpness decomposition'' \citep{brocker2009}:
\begin{equation}\label{equation:DecompositionBrockerSharpness}
    \Risk(f) = \bbE[\ell(f(X), Y)] = \underbrace{\bbE[d_\ell(f(X), C)]}_{\calK_\ell(f)} + \underbrace{ \underbrace{e_\ell(\bbE[Y])}_{\calU_\ell(Y)}  -\underbrace{\bbE[d_\ell(\bbE[Y], C)])}_{\calS_\ell(f)}}_{\calR_\ell(f)}~.
\end{equation}

Alternatively, refinement error can be decomposed into the irreducible loss (risk of the optimal classifier $f^*$), and the grouping loss \citep{brocker2009, kull2015novel, perez2023beyond}; see \Cref{sec:appendix:calref} for more details.

\subsection{Calibration error}
Calibration has been the subject of a significant body of research on its own, not only in the context of the calibration-refinement decomposition. A classifier $f$ is said to be \textit{calibrated} if $f(X) = C$ almost surely. When this is satisfied, for a given prediction $f(X) = p$, the expected outcome $\mathbb{E}_\mathcal{D}[Y|f(X)=p]$ is aligned with $p$. This is a desirable property in that it yields model predictions that are interpretable as the probabilities that $Y$ belongs to each of the $k$ classes.

\begin{remark}
Calibration is traditionally defined as $\mathbb{P}_\mathcal{D}(Y|f(X)) = f(X)$ almost surely.
Notice that since labels $Y \in \{0,1\}^k$ are one-hot encoded, the conditional expectation $C=\mathbb{E}_\mathcal{D}[Y|f(X)]$ matches the conditional class probabilities $\mathbb{P}_\mathcal{D}(Y|f(X))$.
\end{remark}

\paragraph{Calibration error.} Model miscalibration is measured by the expected gap between $f(X)$ and $C$. The choice of a distance or divergence function $d$ gives rise to different notions of \emph{calibration error}: $\calK^{(d)}(f) = \bbE[d(f(X), C)]$. A popular choice for $d$ is the $L^1$ distance, resulting in the expected calibration error \citep[ECE,][]{naeini2015}. Note that when choosing the $L^2$ distance or KL divergence, we recover ``proper'' calibration errors \citep{gruber2022better} that appear in the decomposition of the Brier score and the logloss.
In practice, the distribution $\mathcal{D}$ is only known via a finite set of samples, $(x_i, y_i)_{1\leq i \leq n}$, and $f$ makes continuous predictions $f(x_i) \in \Delta_k$. To compute the calibration error, $C$ is often estimated by binning predictions on the simplex, resulting in estimators that are biased and inconsistent \citep{kumar2019, vaicenavicius2019, roelofs2022}. In the multi-class case, the curse of dimensionality makes estimation even more difficult. Weaker notions such as class-wise or top-label calibration error are often used \citep{kumar2019, kull2019}.

\paragraph{Post-hoc calibration.} Many machine learning classifiers suffer from calibration issues, which has given rise to a family of techniques known as ``post-hoc calibration.'' These methods reduce calibration error after training using a reserved set of samples $\mathcal{C}$ called a ``calibration set.'' Well-known examples include isotonic regression and temperature scaling.

\textbf{Isotonic regression \citep[IR,][]{zadrozny2002transforming}} finds the monotonic function $g^*$ that minimizes the risk of $g \circ f$ on the calibration set. It is a popular calibration technique for binary classifiers. However, the multi-class extensions come with important drawbacks.

\textbf{Temperature scaling \citep[TS,][]{guo2017}} optimizes a scalar parameter $\beta$ to rescale the log-probabilities. Formally, it learns the function $g_{\beta^*}$ on the calibration set with
\begin{IEEEeqnarray*}{+rCl+x*}
\beta^* & \coloneqq & \argmin_{\beta \in \bbR_+} \calL(\beta), \\
\calL(\beta) & \coloneqq & \sum_{(x, y) \in \mathcal{C}} \ell(g_\beta(f(x)), y), \IEEEyesnumber\label{eq:temperatureScaling} \\
g_\beta(p) & \coloneqq & \softmax(\beta\log(p))~.
\end{IEEEeqnarray*}
For neural networks, this comes down to rescaling the last layer by $\beta^*$. TS has many advantages: it is efficient, applies to multi-class problems, does not affect accuracy, and is robust to overfitting since it only has one parameter.

We refer the interested reader to \citet{silva2023} for a detailed review of calibration.

\subsection{Refinement error}
Given the appeal of calibrated predictions, significant effort has been devoted to estimating and reducing calibration error. The fact that it interacts with another term to form the overall risk of the classifier, however, has received much less attention. This is partly due to the fact that refinement error is not well known. So what exactly is refinement?

In decomposition~\eqref{equation:DecompositionBrocker}, refinement appears as the expected $\ell$-entropy of $Y$ given $f(X)$: $\mathcal{R}_\ell(f) = \mathbb{E}[e_\ell(\mathbb{E}[Y|f(X)])]$. We see that if $Y$ is fully determined by $f(X)$, $Y|f(X)$ has no entropy and the refinement error is null. In contrast, if $Y|f(X)$ is as random as $Y$, it is maximal. Refinement error thus quantifies how much of the variability in $Y$ is captured by $f(X)$, with a lower error indicating greater information. Just like classification error, it assesses the ability to distinguish between classes independently of calibration issues such as over/under-confidence. It provides a much more comprehensive measure, however, as it considers the entire distribution of $f(X)$ rather than a single discretization. This comes at a cost; in particular, as with the calibration error, it is much harder to estimate.

\begin{remark}
    As we have seen in decomposition~\eqref{equation:DecompositionBrockerSharpness}, refinement can be further decomposed into the uncertainty $\calU_\ell(Y)$ of $Y$ minus a sharpness term $\calS_\ell(f) = \bbE[d_\ell(\bbE[Y], C)]$, which quantifies the expected gap between the calibrated scores and the best constant predictor $\mathbb{E}[Y]$.
    Intuitively, we see that the more information $f(X)$ contains about $Y$, the more $C$ will deviate from $\mathbb{E}[Y]$, the larger the sharpness. Sharpness and refinement both measure the discrimination power of the classifier $f$; note, however that larger sharpness corresponds to smaller refinement.
    For a fixed data distribution $\mathcal{D}$, the uncertainty $\calU_\ell(Y)$ is constant, so sharpness and refinement are the same up to a constant term.
    In this paper, we choose to consider refinement error, to avoid the need to include an additional constant term in the loss decomposition, but our methods apply directly to sharpness.
\end{remark}

\subsection{A variational formulation}

\begin{figure*}
    \centering
    \begin{tikzpicture}
        \newcommand{\xscale}{3.5cm}
        \newcommand{\xbayes}{\xscale}
        \newcommand{\xref}{2*\xscale}
        \newcommand{\xf}{3*\xscale}
        \newcommand{\xconst}{4*\xscale}
        \newcommand{\xend}{4.5*\xscale}
        \newcommand{\ticky}{0.1cm}
        \newcommand{\labely}{-0.5cm}
        \newcommand{\biglabely}{-0.8cm}
        
        \draw[->] (0, 0) -- (\xend, 0);
        \draw (0, -\ticky) -- (0, \ticky);
        \draw (\xbayes, -\ticky) -- (\xbayes, \ticky);
        \draw (\xref, -\ticky) -- (\xref, \ticky);
        \draw (\xf, -\ticky) -- (\xf, \ticky);
        \draw (\xconst, -\ticky) -- (\xconst, \ticky);

        \node (a) at (0, \labely) {$0$};
        \node (a) at (\xbayes, \biglabely) {$$
          \shortstack{
            $\displaystyle \Risk(f^*)$\\
            $\displaystyle = \min_f \Risk(f)$
          }
        $$};
        \node (a) at (\xref, \biglabely) {$$
          \shortstack{
            $\displaystyle \Risk(g^* \circ f)$\\
            $\displaystyle = \min_{g} \Risk(g \circ f)$
          }
        $$};
        \node (a) at (\xf, \labely) {$\Risk(f)$};
        \node (a) at (\xconst, \biglabely) {$$
          \shortstack{
            $\displaystyle \Risk(c^*)$\\
            $\displaystyle = \min_{c\text{ constant}} \Risk(c)$
          }
        $$};

        \draw[-] (\xf, 0) to[out=165, in=15] node[anchor=south, yshift=-0.05cm]{cal.\ err.} (\xref, 0);
        \draw[-] (\xref, 0) to[out=165, in=15] node[anchor=south, yshift=-0.05cm]{grp.\ loss} (\xbayes, 0);
        \draw[-] (\xbayes, 0) to[out=165, in=15] node[anchor=south, yshift=-0.05cm]{irred.\ loss} (0, 0);
        \draw[-] (\xref, 0) to[out=140, in=40] node[anchor=south]{refinement error} (0, 0);
        \draw[-] (\xf, 0) to[out=140, in=40] node[anchor=south]{reducible loss} (\xbayes, 0);
        \draw[-] (\xconst, 0) to[out=140, in=40] node[anchor=south]{sharpness} (\xref, 0);
    \end{tikzpicture}
    \caption{Relation between different terms relating to proper scoring rules, proven in \Cref{thm:variational_extended}. Here, infima are over measurable functions. A line $A \stackrel{u}{\rule[0.5ex]{2em}{0.4pt}} B$ means $u = B-A$. ``irred.'' stands for irreducible and ``grp.'' for grouping. For the definitions, see \Cref{rem:kullflach}.}
    \label{fig:extended_diagram}
\end{figure*}

We introduce novel variational formulations of the terms in decompositions (\ref{equation:DecompositionBrocker}) and (\ref{equation:DecompositionBrockerSharpness}).
\Cref{fig:extended_diagram} illustrates the relation between terms in these decompositions under this variational perspective (proofs and additional details are deferred to \Cref{sec:appendix:calref}).

\begin{restatable}{theorem}{thmVariational}
\label{thm:variational_extended}
    Let $k \geq 2$, let $\ell: \Delta_k \times \calY_k \to \bbR \cup \{\infty\}$ be a proper loss, and let $f: \calX \to \Delta_k$ be measurable, where $\Delta_k$ is equipped with the Borel $\sigma$-algebra. Then,
    \begin{IEEEeqnarray*}{+rCl+x*}
        \calR_\ell(f) & = & \min_g \Risk(g \circ f)~, \\
        \calK_\ell(f) & = & \Risk(f) - \min_g \Risk(g \circ f)~, \\
        \calS_\ell(f) & = & \min_{c\text{ constant}} \Risk(c) - \min_g \Risk(g \circ f)~, \\
        \calU_\ell(Y) & = & \min_{c\text{ constant}} \Risk(c)~.
    \end{IEEEeqnarray*}
    Here, the minima are over measurable functions $g: \Delta_k \to \Delta_k$ and $c: \calX \to \Delta_k$. They are attained by $g^*(q) \coloneqq \bbE[Y|f(X)=q]$ and $c^*(x) \coloneqq \bbE[Y]$. If $\ell$ is strictly proper, the minimizers are unique up to $P_{f(X)}$-null sets.
\end{restatable}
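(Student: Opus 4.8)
The plan is to reduce every identity to a single pointwise use of properness after conditioning on $f(X)$. Because the labels $Y$ are one-hot, the map $q \mapsto \ell(p,q)$ is affine on $\Delta_k$, so for any measurable $g : \Delta_k \to \Delta_k$ I get
\[
\Risk(g\circ f) = \bbE\big[\ell(g(f(X)),Y)\big] = \bbE\big[\bbE[\ell(g(f(X)),Y)\mid f(X)]\big] = \bbE\big[\ell(g(f(X)),C)\big],
\]
using that $g(f(X))$ is $\sigma(f(X))$-measurable and $\bbE[Y\mid f(X)] = C$. Taking $g = \mathrm{id}$ gives $\Risk(f) = \bbE[\ell(f(X),C)]$, and for a constant map $c \equiv p \in \Delta_k$ the same affineness gives $\Risk(c) = \bbE[\ell(p,Y)] = \ell(p,\bbE[Y])$. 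Since $\ell \ge 0$, each risk above is a well-defined element of $[0,\infty]$, so all the infima in the statement make sense without integrability concerns.

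For the lower bounds I apply properness pointwise: $\ell(g(f(X)),C) \ge \ell(C,C) = e_\ell(C)$ almost surely, so $\Risk(g\circ f) \ge \bbE[e_\ell(C)] = \calR_\ell(f)$; and $\ell(p,\bbE[Y]) \ge \ell(\bbE[Y],\bbE[Y]) = \calU_\ell(Y)$ since $\bbE[Y] \in \Delta_k$ by convexity. The matching upper bounds require admissible minimizers. For the constant case this is immediate with $c^* \equiv \bbE[Y]$, giving $\calU_\ell(Y) = \min_{c\text{ constant}}\Risk(c)$. For $g^*$, note that $C$ is by construction $\sigma(f(X))$-measurable, so the Doob--Dynkin factorization lemma yields a measurable $h : \Delta_k \to \bbR^k$ with $C = h(f(X))$ almost surely; since $C \in \Delta_k$ a.s., the set $N = \{q : h(q) \notin \Delta_k\}$ is $P_{f(X)}$-null, and redefining $h$ on $N$ to equal a fixed point of $\Delta_k$ produces a measurable $g^* : \Delta_k \to \Delta_k$ with $g^*(f(X)) = C$ a.s.\ and $g^*(q) = \bbE[Y\mid f(X)=q]$ for $P_{f(X)}$-a.e.\ $q$. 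Then $\Risk(g^*\circ f) = \bbE[\ell(C,C)] = \calR_\ell(f)$, proving $\calR_\ell(f) = \min_g \Risk(g\circ f)$ and that $g^*$ attains it.

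The two remaining claims are then bookkeeping. Claim two is $\calK_\ell(f) = \Risk(f) - \calR_\ell(f) = \bbE[\ell(f(X),C) - \ell(C,C)] = \bbE[d_\ell(f(X),C)]$, which is decomposition \eqref{equation:DecompositionBrocker}. Claim three follows by subtracting claim one from the constant identity, using $\calR_\ell(f) = \calU_\ell(Y) - \calS_\ell(f)$; this last relation itself comes from the affineness computation above together with $\bbE[C] = \bbE[Y]$, and is decomposition \eqref{equation:DecompositionBrockerSharpness}. For uniqueness under strict properness: if $g$ attains the minimum then $\bbE[\ell(g(f(X)),C) - e_\ell(C)] = 0$ with a nonnegative integrand, hence $\ell(g(f(X)),C) = \ell(C,C)$ a.s., and strict properness forces $g(f(X)) = C = g^*(f(X))$ a.s., i.e.\ $g = g^*$ off a $P_{f(X)}$-null set; the analogous argument over constants gives $\bbE[Y]$ as the unique constant minimizer.

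I expect the main obstacle to be the measurability bookkeeping for $g^*$: extracting a genuine measurable selection valued in $\Delta_k$ (not merely in $\bbR^k$) from the factorization lemma, and keeping track that the minimizer is only defined up to a $P_{f(X)}$-null set because it is built from a conditional expectation. A secondary technical point, if one uses the relaxed definition with possibly $+\infty$-valued, merely quasi-integrable losses rather than bounded nonnegative ones, is to check that $\bbE[\ell(g(f(X)),C)]$ is well defined and that the ``nonnegative integrand of zero mean'' step in the uniqueness argument remains valid; this is where the integrability hypotheses in the definition of a proper loss are used, essentially to guarantee that $d_\ell(g(f(X)),C)$ is an honest nonnegative random variable with $e_\ell(C)$ a.s.\ finite.
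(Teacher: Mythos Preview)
Your proposal is correct and follows essentially the same approach as the paper: both arguments use the affineness of $\ell$ in its second argument to pass from $\bbE[\ell(g(f(X)),Y)]$ to $\bbE[\ell(g(f(X)),C)]$, then apply properness pointwise to obtain the lower bound, and realize equality via $g^*(q)=\bbE[Y\mid f(X)=q]$. The paper packages these steps into two auxiliary lemmas (one for the affineness/conditioning identity and one for the optimality of the conditional expectation), while you do everything inline; you are in fact more explicit than the paper about the Doob--Dynkin factorization and the null-set modification needed to ensure $g^*$ lands in $\Delta_k$.
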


This variational formulation provides a new perspective on post-hoc calibration, showing that minimizing the risk of $g \circ f$ is a principled way to remove the calibration error of $f$.
Indeed, we see that calibration error evaluates the difference between the original risk and the risk after optimal recalibration.
This optimal relabeling is given by the calibrated score $C=\mathbb{E}[Y|f(X)]$.
Refinement error thus measures the risk that remains after optimal re-labeling, in other words the risk of the calibrated scores.
This suggests that the refinement error can be estimated using the risk after post-hoc calibration by the difference between risk before and risk after recalibration.
Note that this yields simple calibration (and refinement) error estimators even in the multi-class case, which to the best of our knowledge, is a problem with no satisfactory answer in the existing literature.


\section{Our method: refinement-based stopping} \label{section:OurMethod}

\begin{figure}[h]
\begin{center}
\centerline{\includegraphics[width=0.5\textwidth]{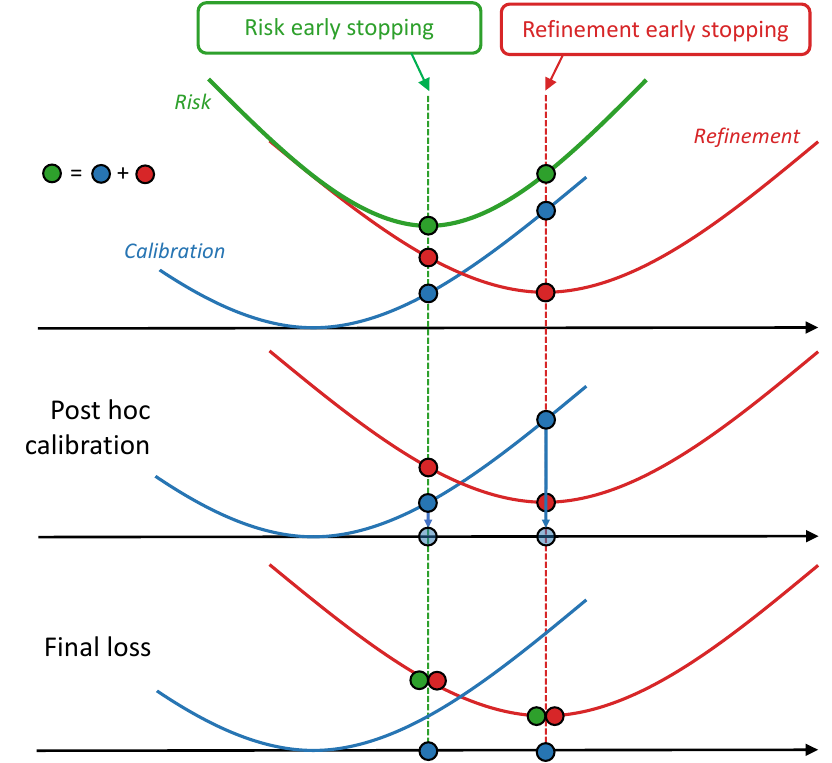}}
\caption{risk-based versus refinement-based early stopping, before and after post-hoc calibration. Dots denote risk (green), refinement (red) and calibration (blue) errors.}
\label{figure:EarlyStopping}
\end{center}
\end{figure}

\subsection{Refinement-based early stopping: the intuition}

\paragraph{Two distinct minimizers.} Minimizing the risk of a classifier comes down to minimizing simultaneously refinement and calibration errors. Empirically, these two errors are not minimized simultaneously during training, as in \Cref{figure:TrainingCurve}. One possible scenario, archetypal in deep learning, is that the training set becomes well separated, forcing the model to make very confident predictions to keep training calibration error small. In case of a train-test generalization gap, the model becomes over-confident on test data \citep{carrell2022calibration} while refinement error could still decrease. Empirically, as we will see, we observe separate minimizers for the two errors across various machine learning models, whether through iterative training or when tuning a regularization parameter. In \Cref{section:HighDimensionalLogisticRegression}, we analyze this phenomenon theoretically for high-dimensional logistic regression, showing that it arises even in very simple settings. The consequence is that risk-based early stopping finds a compromise point between two conflicting objectives: calibration error minimization and refinement error minimization. This corroborates the widely recognized observation that many models are poorly calibrated after training \citep{guo2017}.

\paragraph{Refine, then calibrate.} The question that now arises is: how do we find a model that is optimal for both calibration and refinement? In general, strictly increasing post-hoc calibration techniques such as TS leave refinement unchanged. \citet{berta2024classifier} showed that binary isotonic regression preserves the ROC convex hull, a proxy for refinement. Given that calibration error is minimized after training, with no impact on refinement, we argue that training should be dedicated to the latter only. We propose selecting the best epoch based on validation refinement error instead of validation loss as a new training paradigm for machine learning classifiers. As illustrated in \Cref{figure:EarlyStopping} this procedure leads to smaller loss after post-hoc calibration. Compared with the standard paradigm that minimizes both refinement and calibration during training, we propose \textit{refining} during training, then \textit{calibrating} post hoc.

\begin{center}
\begin{tabular}{c | c c} 
 \toprule
 \thead{Early stopping} & \thead{Training minimizes}  & \thead{Post hoc minimizes} \\
 \midrule
 \thead{Risk} & \thead{Cal. + Ref.} & \thead{Cal.} \\
 \midrule
 \thead{Refinement} & \thead{Ref.} & \thead{Cal.} \\
 \bottomrule
\end{tabular}
\end{center}

\subsection{Refinement-based stopping made practical}

\paragraph{Refinement estimation.} As discussed earlier, calibration error estimators are biased, inconsistent, and break in the multi-class case. Since refinement is risk minus calibration, these issues carry over to refinement estimation. One could circumvent this by using proxies like validation accuracy. \Cref{thm:variational_extended} provides a better alternative: refinement error equals risk after optimal relabeling $g^* \circ f$. We thus need to estimate how small the validation risk can be by re-labeling predictions. Since the validation set $(x_i, y_i)_{1\leq i \leq n}$ is finite, minimizing over all measurable functions $g:\Delta_k \rightarrow\Delta_k$ would lead to drastic overfitting. However, given a class of functions $\mathcal{G}$ that we intend to use for post-hoc calibration, we can estimate refinement error by solving:
\begin{equation}\label{equation:RefinementEstimator}
    \hat{\mathcal{R}_\ell}(f) = \min_{g \color{red}\in \color{red}\mathcal{G}} \frac{1}{n} \sum_{i=1}^n\ell(g(f(x_i)), y_i)
\end{equation}
after every epoch. This comes down to selecting the best training epoch based on ``validation loss after post-hoc calibration.'' Although this is a biased estimator of refinement error, the fact that it can improve performance accords with a recent study by \citet{ranjan2024post}, who report performance gains when selecting the best training epoch based on losses after different post-hoc transformations.
Our observation that calibration and refinement errors are not minimized simultaneously, and our variational  formulation of refinement, provide a theoretical grounding for understanding this kind of empirical observation.
Returning to \Cref{equation:RefinementEstimator}, note that when compared with \Cref{thm:variational_extended}, in \Cref{equation:RefinementEstimator} we limit ourselves to functions in class $\mathcal{G}$. A trade-off arises vis-a-vis the size of $\mathcal{G}$. Larger function classes can detect more complex mis-calibration patterns but are more prone to overfitting the validation set, resulting in poor estimation of the true refinement error. 
For our method to work, we need to choose a small class $\mathcal{G}$ that comes as close as possible to the optimal re-mapping $g^*(f) = \mathbb{E}[Y|f(X)]$.

\paragraph{TS-refinement for neural nets.} We let ``TS-refinement'' refer to the estimator obtained when $\mathcal{G}$ is the class of functions generated by temperature scaling. It is parametrized by a single scalar parameter, limiting the danger of overfitting the validation set. Moreover, TS is very effective at reducing the calibration error of neural networks (however large it is) while leaving refinement unchanged. This is supported by a plethora of empirical evidence \citep{guo2017, wang2021}. Finally, we demonstrate in \Cref{section:GaussianDataModel} that under assumptions on the logit distribution, the optimal re-mapping $g^*$ is attained by TS, making TS-refinement a consistent estimator.

In practice, when training a neural network, we recommend selecting the best epoch in terms of validation loss after temperature scaling. Going back to the example in \Cref{figure:TrainingCurve}, this procedure would select a later epoch (red dot), for which validation loss (green curve) is larger but validation loss after TS (red curve) is smaller. Given that we envisage using TS for post-hoc calibration anyway, this simple procedure guarantees a smaller validation risk.

\begin{figure}[h]
    \centering
    \vspace{0.5em}
    \includegraphics[width=0.5\textwidth]{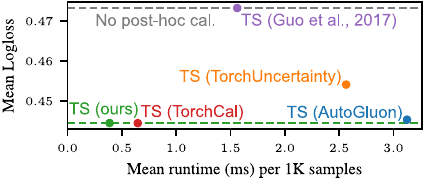}
    \caption{\textbf{Runtime versus mean benchmark scores of different TS implementations.} Runtimes are averaged over validation sets with 10K+ samples. Evaluation is on XGBoost models trained with default parameters, using the epoch with the best validation accuracy.}
    \label{fig:calib-benchmark}
\end{figure}

\paragraph{Improving TS implementations.} Using TS-refinement for early stopping requires a fast and robust temperature-scaling implementation. Surveying existing codebases made it clear that there was room for improvement in terms of speed and even performance.
For the logloss, the objective $\calL(\beta)$ in Eq.~\eqref{eq:temperatureScaling} is convex. Most existing implementations apply L-BFGS \cite{liu1989limited} to optimize $\calL(1/T)$, which is unimodal but nonconvex. Additionally, the step-size choice for L-BFGS sometimes led to suboptimal performance. Instead, since $\calL'$ is increasing by convexity of $\calL$, we propose using bisection to find a zero of $\calL'$. \Cref{fig:calib-benchmark} shows that on the benchmark from \Cref{section:Experiments}, our implementation achieves equal or lower test loss and is faster than alternatives from \cite{guo2017}, TorchUncertainty, TorchCal \citep{ranjan2023torchcal,feinman2021pytorch}, and AutoGluon \citep{erickson2020autogluon}. We provide details in \Cref{appendix:TemperatureScaling}.

\paragraph{Other estimators.} Our method applies to a wide range of machine learning models. It can help select the best step in any iterative training procedure (e.g., boosting) or the level of regularization for non-iterative models. Refinement does not have to be estimated with TS-refinement; any proxy available can be selected. Outside of the neural network setting, TS may be less effective. In the binary setting, using risk after isotonic regression (IR-refinement) might be a good option. It is consistent for a larger class of functions than TS-refinement since the class $\mathcal{G}$ contains all monotonic functions. The risk of overfitting becomes larger, however, and must be considered carefully.


\section{Experiments} \label{section:Experiments}

\paragraph{Computer Vision.} We benchmark our method by training a ResNet-18 \citep{he2016} and WideResNet (WRN) \citep{zagoruyko2016} on CIFAR-10, CIFAR-100 and SVHN datasets \citep{krizhevsky2009, netzer2011}. We reserve 10\% of the training set for validation. We train for 300 epochs using SGD with momentum, weight decay, and a step learning rate scheduler. We use random cropping, horizontal flips, and cutout \citep{devries2017}. For reference, this training procedures allows us to reach $95\%$ accuracy on CIFAR-10 with the ResNet-18 and over $96\%$ with the WRN. We train the models ten times on each dataset. The code to run the benchmark is available at \url{github.com/eugeneberta/RefineThenCalibrate-Vision}. On NVIDIA V100 GPUs, the benchmark took around 300 GPU hours to run.
In \Cref{figure:ComputerVisionBenchmark} we report the relative differences in test logloss obtained with different training procedures. After early stopping on validation loss (first column), using TS results in significant improvement (second column). Early stopping on TS-refinement instead of validation loss leads to even better results (third column). We also compare with validation-accuracy-based early stopping (fourth column), another refinement proxy, showing that TS-refinement more consistently yields smaller loss.

We observed empirically that the learning rate scheduler and regularization strength have an influence on the behavior of calibration error during training. This has consequences on the effect observed with our method. In general though, validation TS-refinement remains the best estimator available for test loss after temperature scaling. Brier score, another proper loss, is less sensitive than logloss to sharp variations in calibration error and can be a good alternative for early stopping for practitioners who want to avoid fitting TS every epoch. We refer the interested reader to \Cref{appendix:ComputerVision} for details on our training procedure and complete results.

\begin{figure*}[h]
\begin{center}
\centerline{\includegraphics[width=\textwidth]{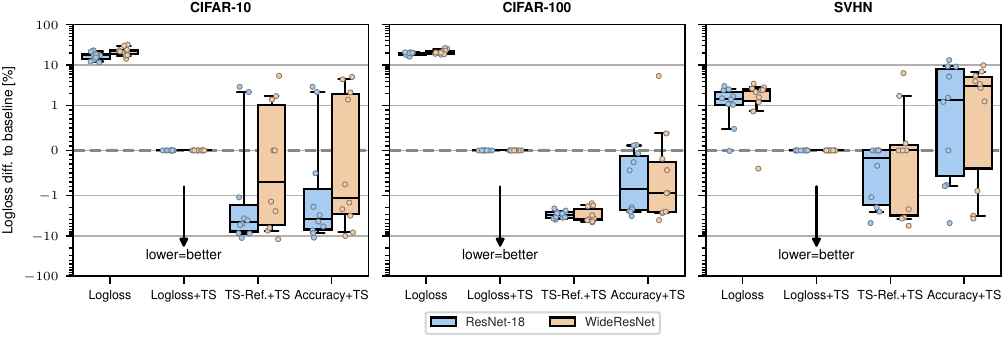}}
\caption{\textbf{Relative differences in test logloss (lower is better) between logloss+TS and other procedures on vision datasets.} ``+TS'' indicates temperature scaling applied to the final model. Each dot represents a training run on one dataset. Box-plots show the 10\%, 25\%, 50\%, 75\%, and 90\% quantiles. Relative differences (y-axis) are plotted using a log scale.}
\label{figure:ComputerVisionBenchmark}
\end{center}
\end{figure*}

\textbf{Tabular data}, where $X$ is a vector of heterogeneous numerical and/or categorical features, is ubiquitous in ML applications. We take 196 binary and multi-class classification datasets from the benchmark from \citet{ye2024closer}, containing between 1K and 100K samples after subsampling the largest datasets; see \Cref{sec:appendix:tabular}. We evaluate three methods:
\begin{itemize}
    \item \textbf{XGBoost} \citep{chen2016xgboost} is a popular implementation of gradient-boosted decision trees, with strong performance on tabular benchmarks \citep{grinsztajn2022tree}. Due to its iterative optimization, early stopping is relevant for XGBoost as well.
    \item \textbf{MLP} is a simple multilayer perceptron, similar to the popular MLP baseline by \citet{gorishniy2021revisiting}.
    \item \textbf{RealMLP} \citep{holzmuller2024better} is a recent state-of-the-art deep learning model for tabular data \citep{ye2024closer, erickson2025tabarena}, improving the standard MLP in many aspects.
\end{itemize}

For each dataset, we report the mean loss over five random splits into 64\% training, 16\% validation, and 20\% test data. For each split, we run 30 random hyperparameter configurations and select the one with the best validation score. The validation set is also used for stopping and post-hoc calibration. Computations took around 40 hours on a 32-core CPU (for XGBoost) and four RTX 3090 GPUs (for NNs).

\begin{figure*}
    \centering
    \begin{minipage}{0.48\textwidth}
\centering
\includegraphics[width=\textwidth]{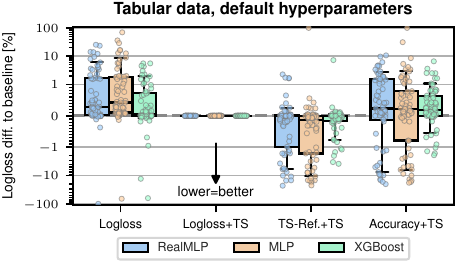}
    \end{minipage}
    \begin{minipage}{0.48\textwidth}
\centering
\includegraphics[width=\textwidth]{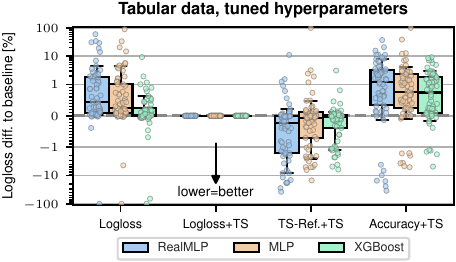}
    \end{minipage}

    \caption{\textbf{Relative differences in test logloss (lower is better) between logloss+TS and other procedures on tabular datasets.} ``+TS'' indicates temperature scaling applied to the final model. Each dot represents one dataset from \citet{ye2024closer}, using the 65 datasets with 10K+ samples. Percentages are clipped to $[-100, 100]$ due to one outlier with almost zero loss. Box-plots show the 10, 25, 50, 75, and 90\% quantiles. Relative differences (y-axis) are plotted using a log scale.} \label{fig:tabular}

\end{figure*}

\Cref{fig:tabular} shows the results on datasets with at least 10K samples. Generally, we observe that TS as well as stopping/tuning on TS-Refinement helps on most datasets, while stopping on accuracy often yields poor logloss even after TS. Our extended analysis in \Cref{sec:appendix:tabular} shows that results are more noisy and unclear for small or easy datasets. However, stopping on TS-Refinement frequently yields better accuracy and AUC than stopping on logloss, and hence strikes an excellent balance between different metrics. 


\section{Decomposition in the Gaussian data model} \label{section:GaussianDataModel}

In this section and the next, we embark on a theoretical analysis that provides insight into the calibration-refinement decomposition. We demonstrate that even with a simple data model and predictor, calibration and refinement errors are not minimized simultaneously. This highlights the relevance of refinement-based early stopping and underscores the broad impact of the problem we address.

We use the following stylized model. Consider the feature and label random variables $X \in \mathbb{R}^p, Y \in \{-1,1\}$ under the two-class Gaussian model $X \sim \mathcal{N}(\mu, \Sigma)$ if $Y=1$ and $X \sim \mathcal{N}(-\mu, \Sigma)$ if $Y=-1$ for some mean $\mu \in \mathbb{R}^p$ and covariance $\Sigma \in \mathbb{R}^{p \times p}$. We further assume balanced classes, $\mathbb{P}(Y=1) = \mathbb{P}(Y=-1) = \frac{1}{2}$. Under this data model, $\mathbb{P}(Y=1|X=x) = \sigma({w^*}^\top x)$ with $w^* = 2 \Sigma^{-1} \mu$ and $\sigma(x) = \frac{1}{1+e^{-x}}$ denotes the sigmoid function. The sigmoid function's shape is well suited to describe the posterior probability of $Y$ given~$X$. Note that this holds for any pair $P(X|Y=\pm 1)$ from the same exponential family \citep{jordan1995logistic}. Proofs for this section are deferred to \Cref{appendix:ProofsGaussianDataModel}.

We are interested in the calibration and refinement errors of the linear model $f(x) = \sigma(w^\top x)$.
The weight vector $w$ can be learned with techniques such as logistic regression or linear discriminant analysis \citep{fisher1936}. Denoting $a_w = \langle w, w^*\rangle_\Sigma / \|w\|_\Sigma$ with $\langle w, w^*\rangle_\Sigma = w^\top  \Sigma w^*$ and $\|w\|_\Sigma = \sqrt{w^\top  \Sigma w}$, the error rate of the linear model can be written as $\err(w) = \Phi(-\frac{a_w}{2})$ where $\Phi(x)=\frac{1}{\sqrt{2\pi}} \int_{-\infty}^x \exp(-\frac{t^2}{2}) dt$ is the cumulative distribution function of the standard normal distribution. Notice that $a_w$ is invariant by rescaling of $w$. It measures the alignment with the best model $w^*$ independently of the weight vector's norm. We refer to $a_w$ as the ``expertise level'' of our model~$f$. Interestingly, $a_w$ also appears in the calibration and refinement errors for this simple model.
\begin{restatable}{proposition}{thmCalRefError} \label{theorem:CalRefError}
    For proper loss $\ell$, the calibration and refinement errors of our model are
    \begin{align*}
    \mathcal{K}_\ell(w) &= \mathbb{E}\Big[ d_\ell \Big( \sigma\Big( \|w\|_\Sigma \Big( z + \frac{a_w}{2} \Big) \Big), \sigma\Big( a_w \Big( z + \frac{a_w}{2} \Big) \Big) \Big) \Big]\\
    \mathcal{R}_\ell(w) &= \mathbb{E}\Big[ e_\ell \Big( \sigma\Big(a_w \Big( z + \frac{a_w}{2}\Big) \Big) \Big) \Big] \, ,
    \end{align*}
    where the expectation is taken with respect to $z \sim \mathcal{N}(0,1)$.
\end{restatable}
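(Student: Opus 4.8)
The plan is to push everything through the one-dimensional statistic $s := w^\top X$; once this is done, the proposition is essentially a direct computation. First I would note that, since $f(x)=\sigma(w^\top x)$ with $\sigma$ a strictly increasing bijection $\bbR\to(0,1)$, the random variables $f(X)$ and $s$ generate the same $\sigma$-algebra, so the calibrated score $C=\bbE[Y\mid f(X)]$ (with $Y$ read as its one-hot encoding, so $C$ is the pair $(\mathbb{P}(Y=1\mid f(X)),\,1-\mathbb{P}(Y=1\mid f(X)))$) equals $\bbE[Y\mid s]$. Under the Gaussian model, conditionally on $Y=1$ we have $s\sim\mathcal{N}(w^\top\mu,\|w\|_\Sigma^2)$; using $w^*=2\Sigma^{-1}\mu$ gives $w^\top\mu=\tfrac12 w^\top\Sigma w^*=\tfrac12\langle w,w^*\rangle_\Sigma=\tfrac12 a_w\|w\|_\Sigma$, so on $\{Y=1\}$ we may write $s=\|w\|_\Sigma(z+a_w/2)$ with $z\sim\mathcal{N}(0,1)$, and on $\{Y=-1\}$ the same with $-a_w/2$ in place of $a_w/2$.

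Next I would compute $C$ by Bayes' rule. The two class-conditional densities of $s$ are Gaussians with common variance $\|w\|_\Sigma^2$ and opposite means $\pm m$, where $m:=\tfrac12 a_w\|w\|_\Sigma$; since the classes are balanced, the quadratic terms cancel in the likelihood ratio and $\mathbb{P}(Y=1\mid s)=\sigma(2ms/\|w\|_\Sigma^2)=\sigma(a_w s/\|w\|_\Sigma)$. Identifying binary distributions with their first coordinate, this yields $C=\sigma(a_w s/\|w\|_\Sigma)$ and $f(X)=\sigma(s)$; substituting the representation $s=\|w\|_\Sigma(z+a_w/2)$ that holds on $\{Y=1\}$ gives $f(X)=\sigma(\|w\|_\Sigma(z+a_w/2))$ and $C=\sigma(a_w(z+a_w/2))$, precisely the arguments appearing in the statement.

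Finally I would substitute into the calibration-refinement decomposition \eqref{equation:DecompositionBrocker}, that is, $\calK_\ell(w)=\bbE[d_\ell(f(X),C)]$ and $\calR_\ell(w)=\bbE[e_\ell(C)]$, conditioning on $Y$. On $\{Y=1\}$ the integrands are exactly $d_\ell(\sigma(\|w\|_\Sigma(z+a_w/2)),\sigma(a_w(z+a_w/2)))$ and $e_\ell(\sigma(a_w(z+a_w/2)))$ with $z\sim\mathcal{N}(0,1)$; on $\{Y=-1\}$ one gets the same expressions with $a_w/2\mapsto -a_w/2$, which after the change of variable $z\mapsto -z$ are the versions with the two simplex coordinates swapped. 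The only point requiring care is how these two class-conditional contributions combine: the data model is invariant under $(X,Y)\mapsto(-X,-Y)$, which maps $(f(X),C)$ to its coordinate swap, so averaging over $Y$ produces the symmetrization of the $\{Y=1\}$ integrand under coordinate swap. For the proper losses of interest here (logloss and the Brier score, and more generally any proper loss whose $d_\ell$ and $e_\ell$ are invariant under swapping the two simplex coordinates) this symmetrization is trivial, and the stated formulas follow. This class-symmetry bookkeeping is the only slightly delicate step; the substantive content is the Bayes-rule computation of $C$, and everything else is routine.
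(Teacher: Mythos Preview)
Your proposal is correct and follows essentially the same route as the paper: reduce to the scalar logit $s=w^\top X$, compute $\mathbb{P}(Y=1\mid s)$ by Bayes' rule, write $s$ as $\|w\|_\Sigma(z\pm a_w/2)$ on each class, and then collapse the two class-conditional integrals into one using the swap symmetry of $e_\ell$ and $d_\ell$. The only cosmetic difference is that the paper splits via the mixture density of $f(X)$ rather than by conditioning on $Y$, and both proofs invoke the same (unstated in the proposition) symmetry hypothesis on $e_\ell,d_\ell$ to merge the two halves.
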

Notice that the refinement error is a decreasing function of $a_w$, just like the error rate. Moreover, it depends on $w$ via $a_w$ only and so it is invariant by rescaling of the weight vector. In contrast, calibration error is not invariant by rescaling $w$. It is minimized, and reaches zero, when $\|w\|_\Sigma = a_w$. The larger the norm of the weight vector $\|w\|_\Sigma$, the larger the predicted probabilities $\sigma(w^\top x)$, so $\|w\|_\Sigma$ can be interpreted as the model's ``confidence level.''
The model is calibrated when this confidence level equals the expertise level $a_w$. A consequence is that there always exists a rescaling of the weight vector for which the calibration error cancels.
\begin{restatable}{proposition}{thmTemperatureScaling}\label{theorem:TemperatureScaling}
    The re-scaled weight vector $w_s \leftarrow s w$ with $s = \langle w, w^*\rangle_\Sigma / \|w\|_\Sigma^2$ yields null calibration error $\mathcal{K}(w_s) = 0$ while preserving the refinement error $\mathcal{R}(w_s) = \mathcal{R}(w)$.
\end{restatable}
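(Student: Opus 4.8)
The plan is to reduce everything to Proposition~\ref{theorem:CalRefError}, which expresses both $\mathcal{K}_\ell(w)$ and $\mathcal{R}_\ell(w)$ through only two scalars attached to $w$: the expertise level $a_w = \langle w, w^*\rangle_\Sigma / \|w\|_\Sigma$ and the confidence level $\|w\|_\Sigma$. The proof then amounts to computing these two scalars for $w_s = s w$ and observing that the prescribed $s = \langle w, w^*\rangle_\Sigma / \|w\|_\Sigma^2$ forces the confidence level to equal the expertise level, so that the calibration integrand collapses onto the diagonal of $d_\ell$ while the refinement term is left untouched.

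First I would record the scaling behaviour. Since $a_w$ is invariant under positive rescaling of $w$ (as already noted in the text), and $s > 0$ in the natural ``better than chance'' regime $\langle w, w^*\rangle_\Sigma > 0$, which I would adopt as a standing assumption, we get $a_{w_s} = a_w$. For the norm, a one-line computation gives $\|w_s\|_\Sigma = s \|w\|_\Sigma = \langle w, w^*\rangle_\Sigma / \|w\|_\Sigma = a_w$, so after rescaling confidence and expertise coincide: $\|w_s\|_\Sigma = a_{w_s}$. Substituting into Proposition~\ref{theorem:CalRefError}: for refinement, $\mathcal{R}_\ell$ depends on $w$ only through $a_w$, so $a_{w_s} = a_w$ immediately yields $\mathcal{R}_\ell(w_s) = \mathcal{R}_\ell(w)$; for calibration, plugging $\|w_s\|_\Sigma = a_{w_s}$ into $\mathcal{K}_\ell(w_s) = \mathbb{E}[d_\ell(\sigma(\|w_s\|_\Sigma(z + a_{w_s}/2)), \sigma(a_{w_s}(z + a_{w_s}/2)))]$ makes the two arguments of $d_\ell$ identical for every $z$, and since $d_\ell(q, q) = \ell(q,q) - \ell(q,q) = 0$ the integrand vanishes pointwise, giving $\mathcal{K}_\ell(w_s) = 0$.

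I do not expect a genuine obstacle: the statement is essentially a single substitution once Proposition~\ref{theorem:CalRefError} is available. The only points needing a little care are (i) justifying $s > 0$ so that $a_{w_s} = a_w$ rather than $-a_w$ --- handled by the standing assumption, with the borderline cases $\langle w, w^*\rangle_\Sigma \le 0$ (a constant or sign-flipped predictor) either excluded or dispatched separately using the evenness of $a \mapsto \mathbb{E}[e_\ell(\sigma(a(z + a/2)))]$ in $a$; and (ii) recalling that the $\ell$-divergence vanishes on its diagonal, which is immediate from $d_\ell(p,q) = \ell(p,q) - \ell(q,q)$.
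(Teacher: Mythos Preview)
Your proposal is correct and follows essentially the same route as the paper's proof: reduce to Proposition~\ref{theorem:CalRefError}, observe that $a_{w_s}=a_w$ so refinement is preserved, and compute $\|w_s\|_\Sigma = s\|w\|_\Sigma = a_w$ so the two arguments of $d_\ell$ coincide and calibration vanishes. If anything you are slightly more careful than the paper, which silently writes $\|sw\|_\Sigma = s\|w\|_\Sigma$ without addressing the sign of $s$; your explicit handling of the $s>0$ assumption and the degenerate case $\langle w,w^*\rangle_\Sigma \le 0$ is a small improvement.
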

Note that such rescaling of the weight vector exactly corresponds to temperature scaling. For classifiers such as logistic regression or neural nets that use a sigmoid to produce probabilities, \Cref{theorem:TemperatureScaling} establishes that, under the assumption that the logit distribution is a mixture of Gaussians, TS sets calibration error to zero while preserving refinement. An immediate corollary of this and \Cref{thm:variational_extended} is that refinement error equals risk after temperature scaling: $\mathcal{R}(w) = \min_{s \in \mathbb{R}} \Risk(sw)$.

\begin{remark}
    For non-centered or imbalanced data, adding an intercept parameter to TS is necessary to achieve the same result. A similar analysis can be derived for the multi-class case, leading to post-hoc calibration with matrix scaling.
\end{remark}

This supports the idea that we should minimize refinement during training. Rescaling the weight vector to the correct confidence level is enough to remove the calibration error after training. From our analysis, it seems clear that calibration and refinement error minimizers do not have to match. The confidence level $\|w\|_\Sigma$ is not constrained to equal the expertise $a_w$ in any way. We observed empirically that loss and refinement minimizers can be separated. Can we push the analysis further to exhibit this theoretically?


\section{High dimensional asymptotics of logistic regression} \label{section:HighDimensionalLogisticRegression}

In this section, we place ourselves in the setting of regularized logistic regression to study the impact of calibration and refinement on the risk during training. Proofs and missing technical details are provided in \Cref{appendix:BetaSpectral}.

\begin{figure}[h]
\begin{center}
\vspace{1em}
\centerline{\includegraphics[width=0.5\textwidth]{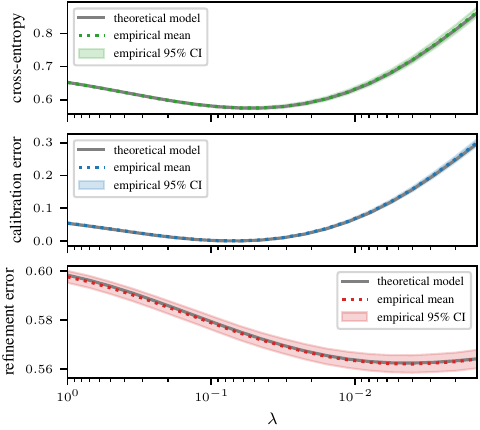}}
\caption{Cross-entropy, calibration and refinement errors when $\lambda$ varies. The spectral distribution $F$ is uniform ($\alpha = \beta = 1$), $e^*=10\%$, and $r=\frac{1}{2}$. We fit a logistic regression on 2000 random samples from our data model to obtain $\hat{w}_\lambda$. We compute the resulting calibration and refinement errors using \Cref{theorem:CalRefError} and plot $95\%$ error bars after 50 seeds. Theory closely matches empirical observations.}
\label{figure:firstCurve}
\end{center}
\end{figure}

Consider $n$ pairs $(x_i, y_i)_{1 \leq i \leq n}$ from the bimodal data model of \Cref{section:GaussianDataModel} and let $w_\lambda$ denote the weight vector learned by solving regularized logistic regression
$$
\min_{w \in \mathbb{R}^p} \frac{1}{n} \sum_{i=1}^n \log(1+ \exp(-y_i w^\top  X_i)) + \frac{\lambda}{2} \| w \|^2 \; ,
$$ for a given regularization strength $\lambda \in \mathbb{R}_+$.

We will exploit known results on $w_\lambda$ in the high-dimensional asymptotic setting where $n, p \rightarrow \infty$ with a constant ratio $p/n \rightarrow r > 0$. We assume that the class means on each dimension $(\mu_i)_{1 \leq i \leq p}$ are sampled i.i.d. from a distribution satisfying $\mathbb{E}[\mu_i^2] = \frac{c^2}{p}$, such that $\| \mu \|_2 \xrightarrow{p \rightarrow \infty} c$ for some constant $c > 0$ that controls the class separability of the problem. To simplify the formulas, we assume that $\Sigma$ is diagonal. We further assume its eigenvalues $(\sigma_i)_{1 \leq i \leq p}$ are sampled i.i.d. from a positive, bounded spectral distribution $F$. \citet{MaiHighDimLR} provide the following characterization of $w_\lambda$
$$
w_\lambda \sim \mathcal{N}\Big(\eta (\lambda I_p + \tau \Sigma)^{-1} \mu, \frac{\gamma}{n} (\lambda I_p + \tau \Sigma)^{-1} \Sigma (\lambda I_p + \tau \Sigma)^{-1} \Big) ,
$$
where $\tau, \eta, \gamma$ are the unique solutions to a non-linear system of equations, parametrized by $r, c, \lambda$ and $F$.
To compute calibration and refinement errors that appear in \Cref{theorem:CalRefError}, we are interested in $\langle w_\lambda, w^* \rangle_\Sigma$ and $\|w_\lambda\|_\Sigma$.

\begin{restatable}{proposition}{propAsymptoticDistributions}\label{proposition:AsymptoticDistributions}
For $n, p \to \infty$,
$$
    \langle w_\lambda, w^* \rangle_\Sigma \xrightarrow{P} \mathbb{E}_{\sigma \sim F}\Big[\frac{2 \eta c^2 }{\lambda + \tau \sigma}\Big] \, ,
$$
$$
    \|w_\lambda\|_\Sigma^2 \xrightarrow{P} \mathbb{E}_{\sigma \sim F}\Big[\frac{\gamma r \sigma^2 + \eta^2 c^2 \sigma}{(\lambda+\tau \sigma)^2} \Big] \, ,
$$
where the convergence is in probability.
\end{restatable}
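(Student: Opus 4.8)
The plan is to turn both quantities into sums of independent scalar terms (plus one Gaussian quadratic form) and apply a coordinatewise law of large numbers. First I would use $w^* = 2\Sigma^{-1}\mu$ to rewrite $\langle w_\lambda, w^*\rangle_\Sigma = w_\lambda^\top \Sigma w^* = 2\, w_\lambda^\top \mu$, so it suffices to control the two functionals $w_\lambda^\top\mu$ and $w_\lambda^\top\Sigma w_\lambda$. Writing $M_\lambda \coloneqq \lambda I_p + \tau\Sigma$ and using the characterization of \citet{MaiHighDimLR} in the equivalent form $w_\lambda \overset{d}{\approx} \eta\, M_\lambda^{-1}\mu + \sqrt{\gamma/n}\,M_\lambda^{-1}\Sigma^{1/2} z$ with $z\sim\mathcal N(0,I_p)$ and $\tau,\eta,\gamma$ the deterministic constants of the fixed-point system, I would expand $w_\lambda^\top\mu$ into a signal term $\eta\,\mu^\top M_\lambda^{-1}\mu$ plus a noise term, and $w_\lambda^\top\Sigma w_\lambda$ into a signal term $\eta^2\,\mu^\top M_\lambda^{-1}\Sigma M_\lambda^{-1}\mu$, two cross terms linear in $z$, and a pure-noise term $\tfrac{\gamma}{n}\, z^\top \Sigma^{1/2}M_\lambda^{-1}\Sigma M_\lambda^{-1}\Sigma^{1/2} z$.

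Since $\Sigma$ is diagonal with i.i.d.\ eigenvalues $\sigma_i\sim F$, independent of the i.i.d.\ means $\mu_i$ with $\mathbb E[\mu_i^2]=c^2/p$, the signal terms are explicit coordinate sums: $\eta\sum_i \mu_i^2/(\lambda+\tau\sigma_i)$ and $\eta^2\sum_i \mu_i^2\sigma_i/(\lambda+\tau\sigma_i)^2$, whose expectations are $\eta c^2\,\mathbb E_{\sigma\sim F}[1/(\lambda+\tau\sigma)]$ and $\eta^2 c^2\,\mathbb E_{\sigma\sim F}[\sigma/(\lambda+\tau\sigma)^2]$. Boundedness of $\operatorname{supp}(F)$ and positivity of $\lambda+\tau\sigma$ make each summand bounded by a constant times $\mu_i^2$ (resp.\ $\mu_i^4$), so with a bounded fourth moment on the normalized means the variance of each sum is $O(1/p)$ and the sums converge in probability to their means. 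The pure-noise term is the diagonal quadratic form $\gamma\,\tfrac{p}{n}\cdot\tfrac1p\sum_i z_i^2\,\sigma_i^2/(\lambda+\tau\sigma_i)^2$; by the Hanson--Wright inequality (or simply the LLN for the independent terms $z_i^2\sigma_i^2/(\lambda+\tau\sigma_i)^2$) together with $p/n\to r$, it converges in probability to $r\gamma\,\mathbb E_{\sigma\sim F}[\sigma^2/(\lambda+\tau\sigma)^2]$. Each cross term is, conditionally on $(\mu,\Sigma)$, a centered Gaussian whose variance equals $\tfrac1n$ times a coordinate sum that itself concentrates to an $O(1)$ constant, hence each cross term is $O_P(n^{-1/2})\to 0$. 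Assembling the surviving pieces gives $\langle w_\lambda,w^*\rangle_\Sigma \xrightarrow{P} \mathbb E_{\sigma\sim F}[2\eta c^2/(\lambda+\tau\sigma)]$ and $\|w_\lambda\|_\Sigma^2 \xrightarrow{P} \eta^2 c^2\,\mathbb E_{\sigma\sim F}[\sigma/(\lambda+\tau\sigma)^2] + \gamma r\,\mathbb E_{\sigma\sim F}[\sigma^2/(\lambda+\tau\sigma)^2]$, which is the claimed expression after combining the two expectations.

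The main obstacle is not the computation but rigorously licensing the substitution of the distributional characterization: the result of \citet{MaiHighDimLR} is an asymptotic equivalence, and since $w\mapsto w^\top\Sigma w$ and $w\mapsto w^\top\mu$ are unbounded (quadratic and linear) rather than bounded Lipschitz, I need either the characterization stated directly for such second-order functionals or a truncation argument: approximate by bounded-Lipschitz functionals of $(w_{\lambda,i},\mu_i,\sigma_i)$, transfer the limit there, and control the remainder using the sub-Gaussian tails of $w_\lambda$ (inherited from the characterization) and the finite moments of $\mu_i,\sigma_i$. A minor additional point is to check that $\lambda+\tau\sigma$ stays bounded away from $0$ on $\operatorname{supp}(F)$ so that all displayed expectations are finite; this follows from the standing assumptions that $F$ is positive and bounded together with $\tau\ge 0$ (and $\lambda>0$, or $F$ supported away from $0$).
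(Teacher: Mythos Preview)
Your proposal is correct and follows essentially the same approach as the paper: write $\langle w_\lambda,w^*\rangle_\Sigma=2w_\lambda^\top\mu$, plug in the Gaussian characterization of $w_\lambda$ from \citet{MaiHighDimLR}, expand coordinatewise using the diagonal $\Sigma$, and apply the law of large numbers to the resulting signal, cross, and noise sums. The paper's proof is terser and does not explicitly address the point you raise about licensing the substitution for unbounded (linear/quadratic) functionals; your discussion of this issue and the boundedness of $\lambda+\tau\sigma$ is a welcome addition rather than a deviation.
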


\begin{remark}
    Results from \citet{dobriban2018} allow us to derive similar results for regularized LDA and Ridge, leading to simpler formulations of refinement and calibration errors. We omit these results due to space constraints.
\end{remark}

The data separability parameter $c$ is hard to interpret; instead, we work with $e^*$, the error rate of the optimal classifier $w^*$.
\begin{restatable}{proposition}{propOptimalErrorRate}\label{proposition:OptimalErrorRate}
As $n,p \rightarrow \infty$,
$$
e^* \xrightarrow{a.s.} \Phi \big(- c \sqrt{\mathbb{E}_{\sigma \sim F}[\sigma^{-1}]} \big) \, .
$$
\end{restatable}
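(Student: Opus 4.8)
The plan is to reduce $e^*$ to a single scalar quantity and then apply a law of large numbers. First I would use the error-rate formula stated just before the proposition, $\err(w) = \Phi(-a_w/2)$ with $a_w = \langle w, w^*\rangle_\Sigma/\|w\|_\Sigma$. Plugging in $w = w^* = 2\Sigma^{-1}\mu$ gives $\langle w^*, w^*\rangle_\Sigma = {w^*}^\top\Sigma w^* = \|w^*\|_\Sigma^2$, hence $a_{w^*} = \|w^*\|_\Sigma$ and, since $\|w^*\|_\Sigma^2 = 4\mu^\top\Sigma^{-1}\Sigma\Sigma^{-1}\mu = 4\mu^\top\Sigma^{-1}\mu$,
\[
e^* = \err(w^*) = \Phi\!\left(-\tfrac{1}{2}\|w^*\|_\Sigma\right) = \Phi\!\left(-\sqrt{\mu^\top \Sigma^{-1}\mu}\right).
\]
So the statement reduces to showing $\mu^\top\Sigma^{-1}\mu \xrightarrow{a.s.} c^2\,\mathbb{E}_{\sigma\sim F}[\sigma^{-1}]$.

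Next, since $\Sigma$ is assumed diagonal with eigenvalues $\sigma_1,\dots,\sigma_p$, we have $\mu^\top\Sigma^{-1}\mu = \sum_{i=1}^p \mu_i^2/\sigma_i$. To make sense of almost-sure convergence along $p\to\infty$, I would realize the model through a fixed underlying i.i.d. sequence: write $\mu_i = \nu_i/\sqrt{p}$ where $(\nu_i)_{i\ge 1}$ are i.i.d. with $\mathbb{E}[\nu_i^2] = c^2$ and independent of the i.i.d. sequence $(\sigma_i)_{i\ge1}\sim F$. Then
\[
\mu^\top\Sigma^{-1}\mu = \frac{1}{p}\sum_{i=1}^p \frac{\nu_i^2}{\sigma_i},
\]
an average of i.i.d. terms. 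Because $F$ is a positive, bounded spectral distribution (support bounded away from $0$ and from $\infty$), $\sigma_i^{-1}$ is bounded, so $\mathbb{E}[\nu_i^2/\sigma_i] = \mathbb{E}[\nu_i^2]\,\mathbb{E}[\sigma_i^{-1}] = c^2\,\mathbb{E}_{\sigma\sim F}[\sigma^{-1}] < \infty$, and Kolmogorov's strong law of large numbers yields $\mu^\top\Sigma^{-1}\mu \xrightarrow{a.s.} c^2\,\mathbb{E}_{\sigma\sim F}[\sigma^{-1}]$.

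Finally I would combine the two steps: the map $t\mapsto \Phi(-\sqrt{t})$ is continuous on $[0,\infty)$, so the continuous mapping theorem for almost-sure convergence gives $e^* \xrightarrow{a.s.} \Phi\!\left(-\sqrt{c^2\mathbb{E}_{\sigma\sim F}[\sigma^{-1}]}\right) = \Phi\!\left(-c\sqrt{\mathbb{E}_{\sigma\sim F}[\sigma^{-1}]}\right)$, using $c>0$ to pull $c$ out of the root. The only genuinely delicate points are (i) making precise the coupling across $p$ that turns the per-$p$ prescription ``$\mu_i$ i.i.d. with $\mathbb{E}[\mu_i^2]=c^2/p$'' into a single i.i.d. sequence to which the SLLN applies, and (ii) the integrability of $\sigma^{-1}$, which is exactly what ``positive, bounded'' provides. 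Everything else is bookkeeping; an alternative route avoiding the coupling is to prove convergence in probability by a Chebyshev argument (only $\mathrm{Var}(\mu^\top\Sigma^{-1}\mu)\to 0$ is needed), upgrading to almost sure along subsequences via Borel--Cantelli if a fourth-moment bound on $\nu_i$ is available.
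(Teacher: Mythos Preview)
Your proposal is correct and follows essentially the same approach as the paper's proof: reduce $e^*$ to $\Phi(-\sqrt{\mu^\top\Sigma^{-1}\mu})$ via $a_{w^*}=\|w^*\|_\Sigma$, expand $\mu^\top\Sigma^{-1}\mu=\sum_i\mu_i^2/\sigma_i$, and apply the strong law of large numbers together with independence of $\mu_i$ and $\sigma_i$. The paper's argument is a single line and does not spell out the coupling across $p$, the integrability check on $\sigma^{-1}$, or the continuous mapping step, so your write-up is more careful than the original on exactly the points you flagged as delicate.
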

$e^*$ provides a common and interpretable scale for the problem difficulty, whatever the shape of the spectral distribution. Given a spectral distribution $F$, \Cref{proposition:OptimalErrorRate} gives a simple relation between $e^*$ and $c$. We can choose any bounded and positive distribution $F$ in our formulas to compute $\langle w_\lambda, w^* \rangle_\Sigma$ and $\|w_\lambda\|_\Sigma^2$. We pick $\sigma = \varepsilon + B(\alpha, \beta)$ where $B$ denotes a Beta distribution with shape parameters $(\alpha, \beta)$ and $\varepsilon$ is a small shift parameter to make $F$ strictly positive (we fix $\varepsilon = 10^{-3}$). This allows us to explore a variety of spectral distribution shapes with a single mathematical model, by tweaking $\alpha$ and $\beta$. Using propositions \ref{proposition:AsymptoticDistributions} and \ref{proposition:OptimalErrorRate}, we obtain formulas for $\langle w_\lambda, w^* \rangle_\Sigma, \|w_\lambda\|_\Sigma^2$ and $e^*$, see \Cref{appendix:BetaSpectral}.

For a given regularization strength $\lambda$, dimensions-to-samples ratio $r$, optimal error rate $e^*$ and spectral distribution $F$ (controlled by $\alpha, \beta$), we can compute $\eta, \tau, \gamma$ by solving the system provided by \citet{MaiHighDimLR}. Using \Cref{proposition:AsymptoticDistributions} and \Cref{theorem:CalRefError} we obtain the calibration and refinement errors. We provide a fast implementation of this method at \url{github.com/eugeneberta/RefineThenCalibrate-Theory}. For a fixed set of parameters $F, r, e^*$, we can plot a learning curve for logistic regression by varying the amount of regularization $\lambda$. We observe the contributions of calibration and refinement errors to the total logloss of the model, as in \Cref{figure:firstCurve}.

In this example, calibration and refinement errors are not minimized jointly, resulting in a loss minimizer somewhere between the two that is suboptimal for both errors. Given that we can easily set the calibration error to zero (\Cref{theorem:TemperatureScaling}), our mathematical model predicts that refinement-based early stopping yields a $2\%$ decrease in downstream cross-entropy on this example.

We can explore the impact of problem parameters on this phenomenon. We investigate three spectral distribution shapes (first row). For each of these, \Cref{figure:heatmap} displays the gap between the calibration error minimizer and the refinement error minimizer (second row) and the relative cross-entropy gains ($\%$) obtained when stopping on refinement instead of loss (third row) as a function of the ratio $r=p/n$ (x-axis) and problem difficulty $e^*$ (y-axis).

\begin{figure}[ht]
\begin{center}
\centerline{\includegraphics[width=0.62\textwidth]{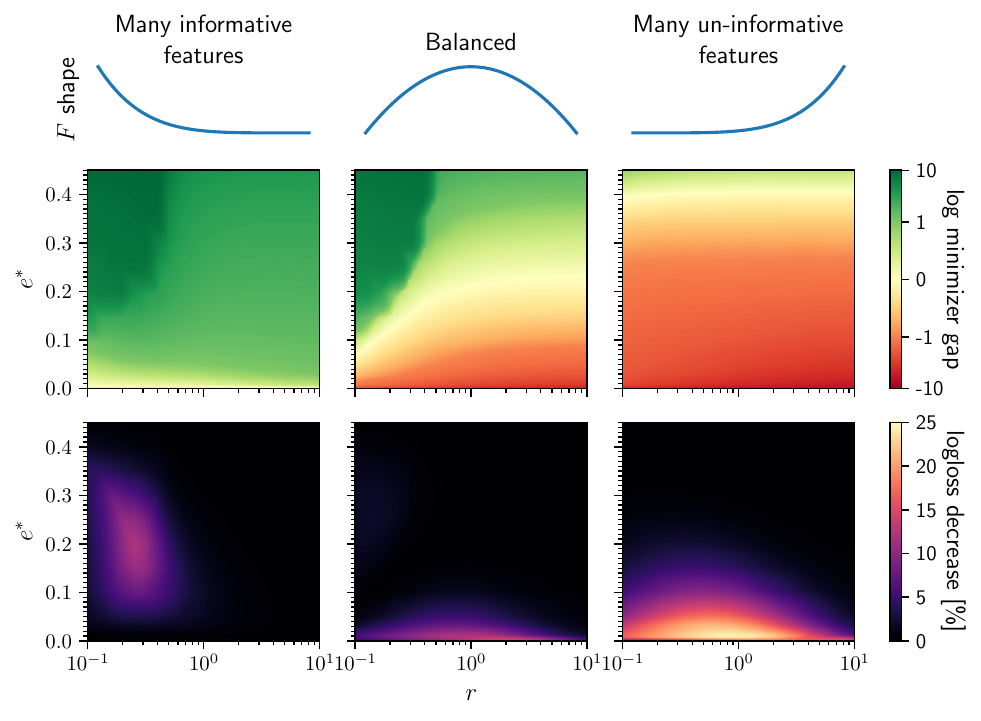}}
\caption{\textbf{Influence of problem parameters on calibration and refinement minimizers.} First row: spectral distribution shape. Second row: log gap between the two minimizers. In green regions, calibration is minimized earlier, while in red regions it is refinement. Third row: relative logloss gain ($\%$) obtained with refinement-based early stopping.}
\label{figure:heatmap}
\end{center}
\end{figure}

To interpret the spectral distribution shapes, note that a small eigenvalue translates to a small ``within class variance'' and thus more separable data on the corresponding axis. Smaller eigenvalues correspond to more informative features. The left column depicts a scenario where most features are equally informative. The central column has a wide spread of good and bad features. In the right column, most features are bad while there are a few ``hidden'' good components. In the first scenario, calibration converges earlier almost everywhere on the parameter space, this corresponds to the regime we observe in computer vision. When informative features are scarce on the other hand, refinement is minimized earlier (right column), which may be an appropriate expectation for sparse problems in genomics or histopathology. When the spectral distribution is more balanced, the scenarios are co-existing, depending on problem parameters. Refinement-based early stopping always yields smaller loss, and in the regions where the two minimizers are far apart (whichever error is minimized first), the gains are significant with up to $25\%$ loss decrease.

\begin{remark}
    \citet{Bartlett2020} establish that benign overfitting occurs for linear regression in a similar data model under some condition on the spectral distribution of the covariance matrix. An avenue for future work is to study how this translates to classification and how it links with the calibration-refinement decomposition.
\end{remark}


\section{Conclusion}
We have demonstrated that selecting the best epoch and hyperparameters based on refinement error solves an issue that arises when using the validation loss. We established that refinement error can be estimated by the loss after post-hoc calibration. This allowed us to use TS-refinement for early stopping, yielding improvement in test loss on various machine learning problems. Our estimator is available in the probmetrics package \url{github.com/dholzmueller/probmetrics} and can be used seamlessly with any architecture.
One limitation is that TS-refinement can be more sensitive to overfitting on small validation sets. In this case, cross-validation, ensembling, and regularized post-hoc calibrators could be useful. Moreover, we did not explore the utility of TS-refinement when training with cross-validation instead of holdout validation. Another little-explored area is the influence of training parameters—such as optimizers, schedulers, and regularization—on the calibration and refinement errors during training.
While refinement-based stopping is relevant in data-constrained settings prone to overfitting, it could also prove useful for the fine-tuning of foundation models, where training long past the loss minimum can be beneficial \citep{ouyang2022training, carlsson2024hyperfitting}. Additionally, while we explore its utility for model optimization, our refinement estimator could be a valuable diagnostic metric, especially in imbalanced multi-class settings where accuracy and area under the ROC curve are less appropriate.


\section*{Acknowledgements}
The authors would like to thank Sacha Braun, Etienne Gauthier, Sebastian Gruber, Alexandre Perez-Lebel, Ga\"el Varoquaux, and Lennart Purucker for fruitful discussions regarding this work.

This publication is part of the Chair ``Markets and Learning'', supported by Air Liquide, BNP PARIBAS ASSET MANAGEMENT Europe, EDF, Orange and SNCF, sponsors of the Inria Foundation.

This work received support from the French government, managed by the National Research Agency, under the France 2030 program with the reference ``PR[AI]RIE-PSAI'' (ANR-23-IACL-0008).

Funded by the European Union (ERC-2022-SYG-OCEAN-101071601). Views and opinions expressed are however those of the author(s) only and do not necessarily reflect those of the European Union or the European Research Council Executive Agency. Neither the European Union nor the granting authority can be held responsible for them.


\bibliographystyle{unsrtnat}
\bibliography{references}

\newpage
\onecolumn
\begin{appendices}
\listofappendices

\counterwithin{figure}{section}
\counterwithin{table}{section}

\crefalias{section}{appendix}
\crefalias{subsection}{appendix}

\newpage


\section{Calibration, Refinement, and Sharpness} \label{sec:appendix:calref}

In this section we first prove our main theoretical result, the variational reformulation of proper loss decompositions.
We then discuss an alternative decomposition and present additional results.

\thmVariational*

\begin{proof}
    Let $g_C(q) \coloneqq \bbE[Y|f(X)=q]$, such that $C = \bbE[Y|f(X)] = g_C(f(X))$ almost surely. Then,
    \begin{IEEEeqnarray*}{+rCl+s*}
        \calR_\ell(f) & = & \bbE[e_\ell(C)] = \bbE[\ell(C, C)] = \bbE[\ell(g_C(f(X)), \bbE[Y|f(X)])] & (by definition) \\
        & = & \bbE[\ell(g_C(f(X)), Y)] & (by \Cref{lemma:risk_second_argument}) \\
        & = & \min_g \bbE[\ell(g(f(X)), Y)]~. & (by \Cref{lemma:inf_cond_exp} with $Z = f(X)$)
    \end{IEEEeqnarray*}
    
    This shows the representation of the refinement error, and \Cref{lemma:inf_cond_exp} also shows that $g^*$ is a minimizer, which is almost surely unique. The representation of the calibration error follows directly from the calibration-refinement decomposition $\Risk(f) = \calR_\ell(f) + \calK_\ell(f)$ \citep{brocker2009}.

    Now, let $Z \in \Delta_k$ be the uniform distribution (to be interpreted as a constant random variable). Then, for $\bar p \coloneqq \bbE[Y]$
    \begin{IEEEeqnarray*}{+rCl+s*}
        e_\ell(\bbE[Y]) & = & \ell(\bbE[Y], \bbE[Y]) = \ell(\bar p, \bbE[Y]) \\
        & = & \bbE[\ell(\bar p, Y)] & ($\ell$ is affine in the second argument) \\
        & = & \min_g \bbE[\ell(g(Z), Y)] & (by \Cref{lemma:inf_cond_exp}) \\
        & = & \min_{c \text{ constant}} \bbE[\ell(c(f(X)), Y)]~.
    \end{IEEEeqnarray*}
    Again, \Cref{lemma:inf_cond_exp} shows that the minimizer is $c^*(q) = \bbE[Y|Z] = \bbE[Y]$.

    For the sharpness, we have
    \begin{IEEEeqnarray*}{+rCl+s*}
        \calS_\ell(f) & = & \bbE[d_\ell(\bbE[Y], C)] = \bbE[\ell(\bbE[Y], C)] - \bbE[\ell(C, C)] = \bbE[\ell(\bbE[Y], \bbE[Y|f(X)])] - \calR_\ell(f)
    \end{IEEEeqnarray*}
    by definition. We already showed above that $\calR_\ell(f) = \min_g \Risk(g \circ f)$. For the other term, we exploit that $\ell$ is affine in its second argument to obtain
    \begin{IEEEeqnarray*}{+rCl+x*}
        \bbE[\ell(\bbE[Y], \bbE[Y|f(X)])] & = & \ell(\bbE[Y], \bbE[\bbE[Y|f(X)]]) = e_\ell(\bbE[Y])~. & \qedhere
    \end{IEEEeqnarray*}
\end{proof}

\begin{remark}[Alternative decompositions by \citet{kull2015novel}] \label{rem:kullflach}
    \citet{kull2015novel} provide a slightly different calibration-refinement decomposition:
    \begin{equation*}
        \bbE[d_\ell(f(X), Y)] = \underbrace{\bbE[d_\ell(f(X), C)]}_{\calK_\ell(f)} + \underbrace{\bbE[d_\ell(C, Y)]}_{\eqqcolon \tilde \calR_\ell(f)}~.
    \end{equation*}
    The alternative refinement error
    \begin{IEEEeqnarray*}{rCl}
        \tilde \calR_\ell(f) = \bbE[d_\ell(C, Y)] & = & \bbE[\ell(C, Y)] - \bbE[\ell(Y, Y)] = \calR_\ell(f) - \bbE[\ell(Y, Y)]
    \end{IEEEeqnarray*}
    equals $\calR_\ell(f)$ if $\bbE[\ell(Y, Y)] = 0$, which is the case for logloss and Brier loss. In this case, the irreducible and grouping loss in \Cref{fig:extended_diagram} also equal the definitions of \citet{kull2015novel} in their decomposition
    \begin{equation*}
        \bbE[d_\ell(C, Y)] = \underbrace{\bbE[d_\ell(C, \bbE[Y|X])]}_{\text{grouping loss}} + \underbrace{\bbE[d_\ell(\bbE[Y|X], Y)]}_{\text{irreducible loss}}~.
    \end{equation*}
    Intuitively, the grouping loss describes the loss that occurs when $f(x_1) = f(x_2)$ but $\bbE[Y|X=x_1] \neq \bbE[Y|X=x_2]$.

    The reducible loss in \Cref{fig:extended_diagram} is called \emph{epistemic loss} in \citet{kull2015novel}.
\end{remark}

\begin{remark}[Extending the definitions of calibration, refinement, and sharpness]
    \Cref{thm:variational_extended} shows that for proper losses, calibration error is not only a divergence relative to a desired target ($C$), but also a potential reduction in loss through post-processing. 
    The latter perspective has been suggested by \citet{ferrer2024evaluating} without showing equivalence to the former perspective.
    On the one hand, an advantage of the former perspective is that it extends to non-divergences like the $L^1$ distance. On the other hand, an advantage of the latter perspective is that it can be used to extend the notion of calibration error, refinement error, and sharpness to non-proper losses (changing the desired target) or even non-probabilistic settings such as regression with MSE loss. Moreover, refinement error and sharpness can even be defined when $f(X)$ is not in the right space for the loss function, for example because it is a hidden-layer representation in a neural network. The variational formulation also allows to define more related quantities by using various function classes $\calG$ and considering
\begin{IEEEeqnarray*}{rCl}
    \inf_{g \in \calG} \Risk(g \circ f)~.
\end{IEEEeqnarray*}
    For example, considering the corresponding variant of calibration error for 1-Lipschitz functions, we obtain the post-processing gap from \citet{blasiok2024does}. In the binary case, using increasing functions, we obtain the best possible risk attainable by isotonic regression.
\end{remark}

\Cref{thm:variational_extended} also implies some further properties of calibration, refinement, and sharpness:




\begin{corollary} \label{cor:calref}
Let $k \geq 2$, let $\ell: \Delta_k \times \calY_k \to \bbR \cup \{\infty\}$ be a proper loss, and let $f: \calX \to \Delta_k$ be measurable, where $\Delta_k$ is equipped with the Borel $\sigma$-algebra.
    \begin{enumerate}[(a)]
        \item If $f$ is calibrated, then its population risk cannot be reduced by post-pocessing, i.e., $\Risk(f) = \inf_g \Risk(g \circ f)$. The converse holds if $\ell$ is strictly proper \citep{blasiok2024does}.
        \item Let $g: \Delta_k \to \Delta_k$ be measurable. Then, $\calR_\ell(g \circ f) \geq \calR_\ell(f)$, and $\calS_\ell(g \circ f) \leq \calS_\ell(f)$.
        \item Let $g: \Delta_k \to \Delta_k$ be measurable and injective. Then, $\calR_\ell(g \circ f) = \calR_\ell(f)$, and $\calS_\ell(g \circ f) = \calS(f)$. (A similar result has been shown in Proposition 4.5 of \citet{gruber2022better}.)
        \item Suppose $f$ is injective with a measurable left inverse (see \Cref{thm:left_inverse} for a sufficient condition). Then, the grouping loss is zero, that is, $\calR_\ell(f) = \inf_{\tilde f} \calR_\ell(\tilde f)$ and $\calS_\ell(f) = \sup_{\tilde f} \calR_\ell(\tilde f)$.
    \end{enumerate}
\end{corollary}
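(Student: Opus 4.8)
The plan is to deduce all four items from the variational identities of \Cref{thm:variational_extended}: $\calR_\ell(f) = \min_g \Risk(g \circ f)$ with the minimum over measurable $g : \Delta_k \to \Delta_k$, the calibration-refinement decomposition $\Risk(f) = \calK_\ell(f) + \calR_\ell(f)$, the identity $\calS_\ell(f) = \calU_\ell(Y) - \calR_\ell(f)$ with $\calU_\ell(Y)$ independent of $f$, and the uniqueness (up to $P_{f(X)}$-null sets) of the minimizer $g^*(q) = \bbE[Y\mid f(X)=q]$ when $\ell$ is strictly proper. For \textbf{(a)}, if $f$ is calibrated then $f(X)=C$ almost surely, so $\calK_\ell(f) = \bbE[d_\ell(f(X),C)] = \bbE[d_\ell(C,C)] = 0$ and hence $\Risk(f) = \calR_\ell(f) = \inf_g \Risk(g\circ f)$. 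Conversely, if $\ell$ is strictly proper and $\Risk(f) = \inf_g\Risk(g\circ f)$, then the identity map attains the minimum defining $\calR_\ell(f)$; by the uniqueness clause it coincides $P_{f(X)}$-a.e.\ with $g^*$, i.e.\ $q = \bbE[Y\mid f(X)=q]$ for $P_{f(X)}$-a.e.\ $q$, which is exactly $f(X)=C$ almost surely. (Equivalently: $\calK_\ell(f)=0$ forces the nonnegative quantity $d_\ell(f(X),C)$ to vanish almost surely, and strict properness makes $d_\ell(p,q)=0$ only when $p=q$.)

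\textbf{Items (b) and (c).} For measurable $g:\Delta_k\to\Delta_k$, every function of the form $h\circ(g\circ f)$ with $h:\Delta_k\to\Delta_k$ measurable is also of the form $\tilde h\circ f$ (take $\tilde h = h\circ g$), so minimizing over the smaller family gives $\calR_\ell(g\circ f) = \min_h\Risk(h\circ g\circ f) \ge \min_{\tilde h}\Risk(\tilde h\circ f) = \calR_\ell(f)$, proving \textbf{(b)} for refinement; the sharpness inequality follows from $\calS_\ell = \calU_\ell(Y)-\calR_\ell$ (using $\calR_\ell,\calS_\ell\ge 0$ to avoid an indeterminate difference should $\calU_\ell(Y)$ be infinite). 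For \textbf{(c)} it remains to show $\calR_\ell(g\circ f)\le\calR_\ell(f)$ for injective $g$. Since $\Delta_k$, as a closed subset of $\bbR^k$, is a standard Borel space, the Lusin--Souslin theorem guarantees that the image $g(\Delta_k)$ is Borel and that $g$ has a Borel-measurable inverse on it; extend this inverse arbitrarily (e.g.\ by a constant) to a measurable $\bar g:\Delta_k\to\Delta_k$ with $\bar g\circ g = \mathrm{id}_{\Delta_k}$. Then for any measurable $\tilde h$, $(\tilde h\circ\bar g)\circ(g\circ f) = \tilde h\circ f$, so $\Risk(\tilde h\circ f)$ also arises as the risk of a post-processing of $g\circ f$; minimizing gives $\calR_\ell(g\circ f)\le\calR_\ell(f)$, hence equality, and likewise for sharpness.

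\textbf{Item (d).} For any measurable $\tilde f$ we have $\calR_\ell(\tilde f) = \min_g\Risk(g\circ\tilde f)\ge\inf_F\Risk(F) = \Risk(f^*)$, where $F$ ranges over measurable maps $\calX\to\Delta_k$ and $f^*(x)=\bbE[Y\mid X=x]$; the pointwise bound here is just properness of $\ell$. The Bayes predictor $f^*$ is calibrated, because $f^*(X)$ is $\sigma(X)$-measurable and so the tower property gives $\bbE[Y\mid f^*(X)] = \bbE[\bbE[Y\mid X]\mid f^*(X)] = f^*(X)$; by \textbf{(a)}, $\calR_\ell(f^*) = \Risk(f^*)$, whence $\inf_{\tilde f}\calR_\ell(\tilde f) = \Risk(f^*)$ (the irreducible loss). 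Now if $f$ has a measurable left inverse $h_0:\Delta_k\to\calX$ with $h_0\circ f = \mathrm{id}_{\calX}$, then $f^*\circ h_0:\Delta_k\to\Delta_k$ is measurable and $(f^*\circ h_0)\circ f = f^*$, so $\calR_\ell(f) = \min_g\Risk(g\circ f)\le\Risk(f^*)$; with the reverse inequality this gives $\calR_\ell(f) = \Risk(f^*) = \inf_{\tilde f}\calR_\ell(\tilde f)$, i.e.\ zero grouping loss, and subtracting from $\calU_\ell(Y)$ yields $\calS_\ell(f) = \sup_{\tilde f}\calS_\ell(\tilde f)$.

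\textbf{Main obstacle.} The only ingredient beyond routine manipulation of the variational identities is the descriptive-set-theory fact invoked in \textbf{(c)}: that an injective Borel self-map of $\Delta_k$ admits a Borel-measurable one-sided inverse (Lusin--Souslin / Kuratowski). In \textbf{(d)} the analogous measurability is instead an explicit hypothesis (with \Cref{thm:left_inverse} providing a concrete sufficient condition), so the only care needed there is the identification of $\inf_{\tilde f}\calR_\ell(\tilde f)$ with the irreducible loss. Throughout, one should keep every use of $\calU_\ell(Y)$ in the form $\calS_\ell = \calU_\ell(Y)-\calR_\ell$ with nonnegative summands, so that the degenerate case $\calU_\ell(Y)=\infty$ causes no issues.
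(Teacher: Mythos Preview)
Your proposal is correct and follows essentially the same route as the paper: all four items are deduced from the variational identities of \Cref{thm:variational_extended}, with (b) via the inclusion $\{h\circ g : h\text{ meas.}\}\subseteq\{\tilde h : \tilde h\text{ meas.}\}$, (c) via a measurable left inverse of $g$ (the paper invokes $g^{-1}$ implicitly while you cite Lusin--Souslin explicitly, which is exactly the content of \Cref{thm:left_inverse}), and (d) via the assumed measurable left inverse of $f$. Your argument for (d), passing through the identification $\inf_{\tilde f}\calR_\ell(\tilde f)=\Risk(f^*)$ and calibration of $f^*$, is slightly more elaborate than the paper's terse ``analogously to (c)'' (which would directly give $\calR_\ell(f)\le\calR_\ell(\tilde f)$ for every $\tilde f$), but both are correct and rest on the same idea.
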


\begin{proof}
    \leavevmode
    \begin{enumerate}[(a)]
        \item By definition, $f$ is calibrated iff $f(X) = \bbE[Y|f(X)]$ almost surely. By \Cref{thm:variational_extended}, $\Risk(g \circ f)$ is minimized by $g^*(q) = \bbE[Y|f(X)=q] = q$, where the equality holds $P_{f(X)}$-almost surely. Hence, $\Risk(f) = \inf_g \Risk(g \circ f)$. For the converse, by \Cref{thm:variational_extended}, if $\ell$ is strictly proper, the minimizer of $\Risk(g \circ f)$ is unique up to $P_{f(X)}$-null sets, hence $\bbE[Y|f(X)=q] = g^*(q) = q$ for $P_{f(X)}$-almost all $q$, hence $\bbE[Y|f(X)] = f(X)$ almost surely.
        \item Since $h \circ g$ is measurable for all measurable $h$, we have
        \begin{equation*}
            \calR_\ell(g \circ f) = \inf_h \Risk((h \circ g) \circ f) \geq \inf_h \Risk(h \circ f) = \calR_\ell(f)~.
        \end{equation*}
        The sharpness inequality follows analogously.
        Then,
        \begin{equation*}
            \calR_\ell(g \circ f) \leq \Risk((h^* \circ g^{-1}) \circ (g \circ f)) = \Risk(h^* \circ f) = \calR_\ell(f)~,
        \end{equation*}
        which, together with (b), yields the claim for the refinement. The argument for sharpness is analogous.
        \item This follows analogously to (c). \qedhere
    \end{enumerate}
\end{proof}

\begin{theorem}[Existence of measurable left inverses] \label{thm:left_inverse}
    Let $(X, \calB_X), (Y, \calB_Y)$ be complete separable metric spaces with their associated Borel $\sigma$-Algebra, and let $f: X \to Y$ be measurable and injective. Then, there exists $g: Y \to X$ measurable with $g \circ f = \operatorname{id}$.
\end{theorem}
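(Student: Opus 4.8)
The plan is to reduce the claim to the classical Lusin--Souslin theorem of descriptive set theory. The first step is to note that a complete separable metric space equipped with its Borel $\sigma$-algebra is a standard Borel space, so the hypotheses ``$f$ Borel measurable and injective between Polish spaces'' are precisely those of the Lusin--Souslin theorem (see, e.g., Kechris, \emph{Classical Descriptive Set Theory}, Section~15). That theorem yields two facts simultaneously: the image $A \coloneqq f(X)$ is a Borel subset of $Y$, and the corestriction $f\colon X \to A$ is a Borel isomorphism, so its set-theoretic inverse $h \coloneqq (f|^{A})^{-1}\colon A \to X$ is itself Borel measurable.

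Granting this, the remainder is a routine gluing argument. Assuming $X \neq \emptyset$ (without which no map $Y \to X$ can exist when $Y \neq \emptyset$), fix any $x_0 \in X$ and set
\[
g(y) \coloneqq \begin{cases} h(y), & y \in A,\\ x_0, & y \notin A.\end{cases}
\]
Because $A$ is Borel, $Y$ is partitioned into the two measurable sets $A$ and $Y \setminus A$, on each of which $g$ coincides with a measurable map ($h$ on $A$, a constant on the complement); hence $g$ is Borel measurable on $Y$. For the defining property, observe that $f(x) \in A$ for every $x \in X$, so $g(f(x)) = h(f(x)) = x$, i.e. $g \circ f = \operatorname{id}_X$.

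The only substantive ingredient is therefore the Lusin--Souslin theorem, and that is where the difficulty resides; everything else is bookkeeping. If one wanted a self-contained proof, the standard route would be: (i) refine the topology of $X$ to a finer Polish topology with the same Borel sets for which $f$ becomes continuous (the usual change-of-topology lemma); (ii) show that an injective continuous image of a Borel subset of a Polish space is simultaneously analytic and coanalytic --- the coanalyticity coming from injectivity via Lusin's separation theorem applied to the $f$-images of a countable Borel generating family --- hence Borel by Souslin's theorem; and (iii) run the same argument on the graph of $h$ to conclude that $h$ is Borel. In the write-up I would simply invoke the theorem, since this machinery is entirely standard.
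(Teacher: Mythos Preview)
Your proof is correct and follows essentially the same route as the paper's: both invoke the Lusin--Souslin theorem (the paper cites it as Corollary~15.2 in Kechris) to conclude that $f$ is a Borel isomorphism onto its image, then extend the inverse by a constant on the complement. If anything, your write-up is slightly more careful, since you explicitly note that $f(X)$ being Borel is what makes the piecewise definition measurable---a point the paper leaves implicit.
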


\begin{proof}
    Using Corollary 15.2 by \cite{kechris_classical_1995}, $f$ is a Borel isomorphism from $X$ to $f(X)$, so its inverse $f^{-1}: f(X) \to X$ is measurable. But then, for a fixed $y_0 \in Y$,
    \begin{equation*}
        g: Y \to X, y \mapsto \begin{cases}
            f^{-1}(y) &, y \in f(X) \\
            y_0 &, y \not\in f(X)
        \end{cases}
    \end{equation*}
    is a measurable left-inverse to $f$, proving the claim.
\end{proof}

\begin{lemma} \label{lemma:risk_second_argument}
    Let $(Z, Y) \in \calZ \times \calY_k$ be random variables with distribution $P$ and let $\ell$ be proper. Then,
    \begin{equation*}
        \bbE[\ell(g(Z), \bbE[Y|Z])] = \bbE[\ell(g(Z), Y)]~.
    \end{equation*}
\end{lemma}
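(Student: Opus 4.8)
The plan is to reduce the statement to the single structural fact that a proper loss is affine — indeed linear, with the convention $0\cdot\infty=0$ — in its second argument when that argument ranges over $\Delta_k$. Write $e_1,\dots,e_k$ for the one-hot vectors, which are exactly the points of $\calY_k$, and set $h_i(z)\coloneqq\ell(g(z),e_i)$ for a measurable $g:\calZ\to\Delta_k$. First I would record the two elementary identities: (i) $z\mapsto h_i(z)$ is Borel measurable, being the composition of the measurable $g$ with $\ell(\cdot,e_i)$; and (ii) by the very definition $\ell(p,q)=\bbE_{Y\sim q}[\ell(p,Y)]$ together with finiteness of $\calY_k$, we have $\ell(g(z),q)=\sum_{i=1}^k q_i\,h_i(z)$ for every $q\in\Delta_k$, and in particular $\ell(g(z),Y)=\sum_{i=1}^k Y_i\,h_i(z)$ since $Y$ is one-hot.

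Next I would condition on $Z$. Each $h_i(Z)$ is $\sigma(Z)$-measurable and $Y_i\ge 0$, so the "take out what is known" property of conditional expectation gives
\[
\bbE[\ell(g(Z),Y)\mid Z]=\bbE\Big[\textstyle\sum_{i=1}^k Y_i\,h_i(Z)\,\Big|\,Z\Big]=\sum_{i=1}^k h_i(Z)\,\bbE[Y_i\mid Z]=\ell\big(g(Z),\bbE[Y\mid Z]\big),
\]
the last equality being identity (ii) applied with $q=\bbE[Y\mid Z]\in\Delta_k$. Taking outer expectations via the tower property then yields $\bbE[\ell(g(Z),Y)]=\bbE[\ell(g(Z),\bbE[Y\mid Z])]$, which is the claim.

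The one genuine subtlety — the step I expect to require the most care — is integrability bookkeeping: $h_i$ may equal $+\infty$, and under the stated generality it could be negative, so the conditional pull-out and the tower property must be justified beyond the integrable case. When $\ell\ge 0$ (which covers logloss, the Brier score, and all examples of interest), every manipulation above is valid in $[0,\infty]$ by the nonnegative forms of "take out what is known" and of the tower property, so the identity holds unconditionally. In the general quasi-integrable case I would split $h_i=h_i^+-h_i^-$, with $h_i^-$ finite-valued since $\ell$ never takes the value $-\infty$, treat the $h_i^+$ terms by the nonnegative argument, and handle the $h_i^-$ terms using $\sum_{i=1}^k\bbE[Y_i\,h_i^-(Z)]=\bbE[\ell(g(Z),Y)^-]$, which is finite as soon as $\ell(g(Z),Y)$ is quasi-integrable; the analogous bound $\ell(g(Z),\bbE[Y\mid Z])^-\le\sum_i \bbE[Y_i\mid Z]\,h_i^-(Z)$ controls the other side, so the positive and negative parts recombine and no further difficulty arises.
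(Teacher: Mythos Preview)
Your proof is correct and follows essentially the same approach as the paper: both arguments exploit that $\ell$ is affine in its second argument and then condition on $Z$. The paper phrases the conditioning step via a regular conditional distribution $P_{Y\mid Z}$ and a disintegration, whereas you write out the finite sum $\ell(g(z),q)=\sum_i q_i\,h_i(z)$ explicitly and use the tower property; in the present finite-$\calY_k$ setting these are the same computation, and your added integrability bookkeeping is more careful than what the paper provides.
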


\begin{proof}
    Since $\calY_k$ is a Radon space, there exists a regular conditional probability distribution $P_{Y|Z}$, using which we obtain
    \begin{IEEEeqnarray*}{+rCl+x*}
        \bbE[\ell(g(Z), \bbE[Y|Z])] & = & \int \ell(g(z), \int y P_{Y|Z}(\diff y|z)) \diff P_Z(z) \\
        & = & \iint \ell(g(z), y)  P_{Y|Z}(\diff y|z) \diff P_Z(z) & (since $\ell$ is affine in the second argument) \\
        & = & \int \ell(g(z), y) \diff P(y, z) \\
        & = & \bbE[\ell(g(Z), Y)]~. & \qedhere
    \end{IEEEeqnarray*}
\end{proof}

The following lemma, used in the proof of \Cref{thm:variational_extended}, provides a formal version of a well-known result.

\begin{lemma} \label{lemma:inf_cond_exp}
    Let $(Z, Y) \in \calZ \times \calY_k$ be random variables and let $\ell$ be proper. Then, for measurable $g: \calZ \to \calY$,
    \begin{IEEEeqnarray*}{+rCl+x*}
        \Risk(g) \coloneqq \bbE[\ell(g(Z), Y)] 
    \end{IEEEeqnarray*}
    is minimized by $g^*(z) \coloneqq \bbE[Y|Z=z]$. If $\ell$ is strictly proper, then $g^*$ is the unique minimizer up to $P_Z$-null sets.
\end{lemma}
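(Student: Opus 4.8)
The plan is to reduce the whole statement to the pointwise properness inequality by conditioning on $Z$. First I would observe that $g^*$ is an admissible competitor: since $\calY_k$ is a finite, hence standard Borel, space there is a regular conditional distribution $P_{Y|Z}(\cdot\mid z)$ of $Y$ given $Z$, and $g^*(z) \coloneqq \bbE[Y\mid Z=z] = \int y\, P_{Y|Z}(\diff y\mid z)$ is a well-defined measurable map $\calZ \to \Delta_k$ (it lands in $\Delta_k$ by convexity of the simplex). This is the same machinery already invoked in \Cref{lemma:risk_second_argument}.

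Next I would disintegrate the risk. For an arbitrary measurable $g$ with values in $\Delta_k$,
\[
  \Risk(g) = \bbE[\ell(g(Z), Y)] = \int \Big( \int \ell(g(z), y)\, P_{Y|Z}(\diff y\mid z) \Big)\diff P_Z(z)~.
\]
For fixed $z$, the inner integral is exactly $\ell(g(z), q)$ where $q = P_{Y|Z}(\cdot\mid z)$ is read as the categorical law with mean vector $g^*(z)$; since $Y$ is one-hot encoded the mean determines the law, so the inner integral equals $\ell(g(z), g^*(z))$. (This is the computation of \Cref{lemma:risk_second_argument}, now with the first argument frozen.) Hence $\Risk(g) = \bbE[\ell(g(Z), g^*(Z))]$, and in particular $\Risk(g^*) = \bbE[\ell(g^*(Z), g^*(Z))] = \bbE[e_\ell(g^*(Z))]$, consistent with \Cref{lemma:risk_second_argument}. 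Applying properness pointwise, $\ell(g^*(z), g^*(z)) \le \ell(g(z), g^*(z))$ for every $z$, and integrating against $P_Z$ gives $\Risk(g^*) \le \Risk(g)$ — valid in $\bbR \cup \{\infty\}$ since it is an inequality between $P_Z$-a.e.\ comparable integrands. This proves optimality of $g^*$.

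For uniqueness under strict properness, suppose $g$ also attains the minimum, which we may take to be finite (otherwise, by the displayed inequality, every $g$ is a minimizer and there is nothing to prove; this finiteness holds in the cases of interest such as logloss and Brier, where $e_\ell$ is bounded on $\Delta_k$). Then $\ell(g(Z), g^*(Z)) - \ell(g^*(Z), g^*(Z)) \ge 0$ has zero expectation, hence vanishes $P_Z$-almost surely; strict properness then forces $g(z) = g^*(z)$ for $P_Z$-almost every $z$. I expect the only real friction to be the measure-theoretic bookkeeping — the regular conditional distribution, measurability of $g^*$, and justifying the disintegration when $\ell$ can equal $+\infty$ via the quasi-integrability hypothesis — but all of this is essentially inherited from \Cref{lemma:risk_second_argument}, so the genuinely new content (the pointwise properness comparison and the almost-sure equality argument) is short.
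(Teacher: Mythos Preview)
Your proposal is correct and follows essentially the same route as the paper: rewrite $\Risk(g)=\bbE[\ell(g(Z),\bbE[Y|Z])]$ via the disintegration underlying \Cref{lemma:risk_second_argument}, apply the pointwise properness inequality, and for uniqueness argue that a nonnegative integrand with zero mean vanishes $P_Z$-a.e. The only cosmetic difference is that you inline the disintegration whereas the paper invokes \Cref{lemma:risk_second_argument} twice as a black box; your added remark on the degenerate case $\Risk(g^*)=\infty$ is a small extra that the paper omits.
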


\begin{proof}
    We have
    \begin{IEEEeqnarray*}{+rCl+s*}
        \bbE[\ell(g(Z), Y)] & = & \bbE[\ell(g(Z), \bbE[Y|Z])] & (by \Cref{lemma:risk_second_argument}) \\
        &\geq& \bbE[\ell(\bbE[Y|Z], \bbE[Y|Z])] & ($\ell$ is proper) \\
        &=& \bbE[\ell(\bbE[Y|Z], Y)]  & (by \Cref{lemma:risk_second_argument})  \\
        &=& \bbE[\ell(g^*(Z), Y)]~.
    \end{IEEEeqnarray*}

    If $\Risk(g) = \Risk(g^*)$, the inequality above can only be strict for $z$ in a $P_Z$-null set. If $\ell$ is strictly proper, then the inequality is violated whenever $g^*(z) \neq g(z)$.
\end{proof}




\section{Proofs for \Cref{section:GaussianDataModel}}\label{appendix:ProofsGaussianDataModel}

Under the data model introduced in \Cref{section:GaussianDataModel}, $\mathbb{P}(Y=1|X=x) = \sigma({w^*}^\top  x)$ with $w^* = 2 \Sigma^{-1} \mu$ and $\sigma(x) = \frac{1}{1+e^{-x}}$.
\begin{proof}
\begin{align*}
    \mathbb{P}(Y=1|X=x) &= \frac{\mathbb{P}(X=x|Y=1) \mathbb{P}(Y=1)}{\mathbb{P}(X=x)} \quad \text{using Bayes' theorem.}\\
    &= \frac{\mathbb{P}(X=x|Y=1) \mathbb{P}(Y=1)}{\mathbb{P}(X=x|Y=1) \mathbb{P}(Y=1) + \mathbb{P}(X=x|Y=-1) \mathbb{P}(Y=-1)} \quad \text{using the law of total probability.}\\
    &= \frac{\mathcal{N}(x | \mu, \Sigma)}{\mathcal{N}(x | \mu, \Sigma) + \mathcal{N}(x | -\mu, \Sigma)}\quad \text{using that $\mathbb{P}(Y=1) = \mathbb{P}(Y=-1)$ and our data model.}\\
    &= \sigma \Big(\frac{1}{2}(x^\top \Sigma^{-1}\mu + \mu^\top \Sigma^{-1}x + x^\top \Sigma^{-1}\mu + \mu^\top \Sigma^{-1}x) \Big) \quad \text{dividing up and down by $\mathcal{N}(x | \mu, \Sigma)$.}\\
    &= \sigma(2\mu^\top \Sigma^{-1}x) = \sigma({w^*}^\top x)
\end{align*}
With $w^* = 2 \Sigma^{-1} \mu$.
\end{proof}

We now prove the following lemmas that we will use to demonstrate \Cref{theorem:CalRefError}.
\begin{lemma}\label{lemma:Ygivenf}
$\mathbb{P}(Y=1 | f(X)=\sigma(z)) = \sigma\Big(\frac{\langle w, w^*\rangle_\Sigma}{\|w\|_\Sigma^2} z \Big)$.
\end{lemma}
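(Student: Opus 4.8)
The plan is to reduce the statement to a one-dimensional Gaussian mixture posterior computation. First I would observe that since $\sigma: \mathbb{R} \to (0,1)$ is a bijection, the event $\{f(X) = \sigma(z)\}$ coincides with $\{w^\top X = z\}$, so it suffices to compute $\mathbb{P}(Y=1 \mid w^\top X = z)$. Because $w^\top X$ admits a density (it is a nondegenerate Gaussian in each class, assuming $w \neq 0$), this conditional probability is well defined as a regular conditional probability and can be evaluated via Bayes' rule with densities rather than through a measure-zero-event subtlety.

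Next I would identify the class-conditional laws of the scalar $w^\top X$. Under the data model of \Cref{section:GaussianDataModel}, $X \mid Y=1 \sim \mathcal{N}(\mu, \Sigma)$ and $X \mid Y=-1 \sim \mathcal{N}(-\mu, \Sigma)$, hence $w^\top X \mid Y = 1 \sim \mathcal{N}(w^\top \mu, \|w\|_\Sigma^2)$ and $w^\top X \mid Y = -1 \sim \mathcal{N}(-w^\top \mu, \|w\|_\Sigma^2)$, where $\|w\|_\Sigma^2 = w^\top \Sigma w$. Using balanced priors $\mathbb{P}(Y=1) = \mathbb{P}(Y=-1) = \tfrac12$, Bayes' theorem and the law of total probability give
\begin{equation*}
    \mathbb{P}(Y=1 \mid w^\top X = z) = \frac{\mathcal{N}(z \mid w^\top \mu, \|w\|_\Sigma^2)}{\mathcal{N}(z \mid w^\top \mu, \|w\|_\Sigma^2) + \mathcal{N}(z \mid -w^\top \mu, \|w\|_\Sigma^2)}~.
\end{equation*}

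Then I would divide numerator and denominator by $\mathcal{N}(z \mid w^\top \mu, \|w\|_\Sigma^2)$ and simplify the exponent: the difference of the two quadratic forms collapses, $\tfrac{-(z - w^\top\mu)^2 + (z + w^\top\mu)^2}{2\|w\|_\Sigma^2} = \tfrac{2 z\, w^\top \mu}{\|w\|_\Sigma^2}$, so the expression becomes $\big(1 + \exp(-\tfrac{2 w^\top \mu}{\|w\|_\Sigma^2} z)\big)^{-1} = \sigma\big(\tfrac{2 w^\top \mu}{\|w\|_\Sigma^2} z\big)$. Finally I would substitute $w^* = 2\Sigma^{-1}\mu$ to get $\langle w, w^*\rangle_\Sigma = w^\top \Sigma w^* = 2 w^\top \mu$, which turns $\tfrac{2 w^\top \mu}{\|w\|_\Sigma^2}$ into $\tfrac{\langle w, w^*\rangle_\Sigma}{\|w\|_\Sigma^2}$ and yields the claimed identity.

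There is no substantial obstacle here; the only point requiring a little care is the conditioning on a null event, which is legitimate because $w^\top X$ has a density (this is implicitly assumed via $w \neq 0$, the only case of interest), and everything else is the standard reduction of a two-component Gaussian mixture posterior to a logistic/sigmoid form already used in the preceding derivation of $\mathbb{P}(Y=1\mid X=x) = \sigma({w^*}^\top x)$.
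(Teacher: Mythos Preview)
Your proposal is correct and follows essentially the same route as the paper's proof: reduce $\{f(X)=\sigma(z)\}$ to $\{w^\top X = z\}$, apply Bayes' rule with balanced priors and the class-conditional laws $w^\top X \mid Y=\pm1 \sim \mathcal{N}(\pm w^\top\mu, \|w\|_\Sigma^2)$, simplify the resulting ratio of Gaussian densities into sigmoid form, and finally substitute $\langle w, w^*\rangle_\Sigma = 2w^\top\mu$. Your added remark on regular conditional probabilities and the implicit assumption $w\neq 0$ is a welcome point of rigor that the paper's proof leaves implicit.
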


\begin{lemma}\label{lemma:distrf}
$\mathbb{P}(f(X)=\sigma(z)) = \frac{1}{2}\mathcal{N}( z | \frac{\langle w, w^*\rangle_\Sigma}{2}, \|w\|_\Sigma^2) + \frac{1}{2}\mathcal{N}( z | -\frac{\langle w, w^*\rangle_\Sigma}{2}, \|w\|_\Sigma^2)$.
\end{lemma}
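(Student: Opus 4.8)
The plan is to reduce everything to the one-dimensional linear statistic $Z \coloneqq w^\top X$. Since the sigmoid $\sigma: \mathbb{R} \to (0,1)$ is a strictly increasing bijection, the prediction $f(X) = \sigma(w^\top X)$ equals $\sigma(z)$ exactly when $w^\top X = z$; accordingly, the quantity $\mathbb{P}(f(X) = \sigma(z))$ in the statement is to be read as the density of $Z = w^\top X$ at $z$ (this is the same convention under which \Cref{lemma:Ygivenf} is stated, and reading it this way avoids carrying a Jacobian factor $\sigma'$). So it suffices to compute the density of $w^\top X$ at $z$.

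First I would condition on the label. Under the data model, $X \mid Y = 1 \sim \mathcal{N}(\mu, \Sigma)$ and $X \mid Y = -1 \sim \mathcal{N}(-\mu, \Sigma)$, so the scalar $w^\top X$ is Gaussian in each case, being an affine image of a Gaussian: $w^\top X \mid Y = 1 \sim \mathcal{N}(w^\top \mu, w^\top \Sigma w)$ and $w^\top X \mid Y = -1 \sim \mathcal{N}(-w^\top \mu, w^\top \Sigma w)$, where $w^\top \Sigma w = \|w\|_\Sigma^2$ by definition of the $\Sigma$-norm. Next I would rewrite the conditional means in terms of $\langle w, w^*\rangle_\Sigma$: recall from the start of \Cref{appendix:ProofsGaussianDataModel} that $w^* = 2\Sigma^{-1}\mu$, equivalently $\mu = \tfrac12 \Sigma w^*$, so $w^\top \mu = \tfrac12 w^\top \Sigma w^* = \tfrac12 \langle w, w^*\rangle_\Sigma$. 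Hence the two conditional densities at $z$ are $\mathcal{N}(z \mid \tfrac12\langle w, w^*\rangle_\Sigma, \|w\|_\Sigma^2)$ and $\mathcal{N}(z \mid -\tfrac12\langle w, w^*\rangle_\Sigma, \|w\|_\Sigma^2)$.

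Finally I would combine these via the law of total probability, using the balanced-class assumption $\mathbb{P}(Y=1) = \mathbb{P}(Y=-1) = \tfrac12$, which yields exactly the claimed two-component Gaussian mixture. I do not expect a genuine obstacle: the only point requiring care is the notational convention that $\mathbb{P}(f(X)=\sigma(z))$ denotes a density in the $z$-parametrization rather than in the $f(X)$-coordinate — once that is pinned down, what remains is just the elementary fact that affine images of Gaussians are Gaussian, together with a single substitution using $w^* = 2\Sigma^{-1}\mu$.
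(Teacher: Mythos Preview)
Your proposal is correct and follows essentially the same approach as the paper: condition on $Y$, use that $w^\top X$ is Gaussian with mean $\pm w^\top\mu$ and variance $w^\top\Sigma w$, rewrite $w^\top\mu = \tfrac12\langle w,w^*\rangle_\Sigma$ via $w^* = 2\Sigma^{-1}\mu$, and combine with the balanced-class assumption. Your explicit remark on the $z$-parametrization convention is a welcome clarification that the paper leaves implicit.
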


\begin{proof}
\begin{align*}
    \mathbb{P}(Y=1 | f(X)=\sigma(z)) &= \mathbb{P}(Y=1 | \sigma(w^\top X)=\sigma(z))\\
    &= \mathbb{P}(Y=1 | w^\top X=z)\\
    &= \frac{\mathbb{P}(w^\top X=z | Y=1)\mathbb{P}(Y=1)}{\mathbb{P}(w^\top X=z)}\\
    &= \frac{\mathbb{P}(w^\top X=z | Y=1)\mathbb{P}(Y=1)}{\mathbb{P}(w^\top X=z | Y=1)\mathbb{P}(Y=1) + \mathbb{P}(w^\top X=z | Y=-1)\mathbb{P}(Y=-1)}\\
\intertext{Using that the affine transformation of a multivariate normal dist. is a normal dist. ($w^\top X \sim \mathcal{N}(\pm w^\top \mu, w^\top \Sigma w)$)}
    &= \frac{\mathcal{N}(z | w^\top \mu, w^\top \Sigma w)}{\mathcal{N}( z | w^\top \mu, w^\top \Sigma w) + \mathcal{N}(z | -w^\top \mu, w^\top \Sigma w)}\\
    &= \sigma\Bigg(\frac{(z+w^\top \mu)^2 - (z-w^\top \mu)^2}{2w^\top \Sigma w}\Bigg) = \sigma\Bigg(\frac{2 w^\top \mu}{w^\top \Sigma w} z \Bigg) = \sigma\Bigg(\frac{\langle w, w^*\rangle_\Sigma}{\|w\|_\Sigma^2} z \Bigg)
\end{align*}
Along the way, we derived the distribution of $f(X)$:
\begin{align*}
    \mathbb{P}(f(X)=\sigma(z)) &= \mathbb{P}(w^\top X=z | Y=1)\mathbb{P}(Y=1) + \mathbb{P}(w^\top X=z | Y=-1)\mathbb{P}(Y=-1)\\
    &= \frac{1}{2}\mathcal{N}(z | w^\top \mu, w^\top \Sigma w) + \frac{1}{2}\mathcal{N}(z | -w^\top \mu, w^\top \Sigma w)\\
    &= \frac{1}{2}\mathcal{N}(z | \frac{\langle w, w^*\rangle_\Sigma}{2}, \|w\|_\Sigma^2) + \frac{1}{2}\mathcal{N}(z | -\frac{\langle w, w^*\rangle_\Sigma}{2}, \|w\|_\Sigma^2)
\end{align*}
\end{proof}

\thmCalRefError*

\begin{proof}
The refinement error writes
$$
\mathcal{R}(w) = \mathbb{E}_{f(X)}[e_\ell(\mathbb{E}[Y|f(X)])]
$$
Using \Cref{lemma:Ygivenf} and \Cref{lemma:distrf},
\begin{align*}
    \mathcal{R}(w) = &\frac{1}{2} \int e_\ell \Bigg(\sigma\Big(\frac{\langle  w, w^*\rangle_\Sigma}{\|w\|_\Sigma^2} z \Big)\Bigg) \mathcal{N}\Big(z \Big| \frac{\langle w, w^*\rangle_\Sigma}{2}, \|w\|_\Sigma^2\Big) dz\\
    &+  \frac{1}{2} \int e_\ell \Bigg(\sigma\Big(\frac{\langle w, w^*\rangle_\Sigma}{\|w\|_\Sigma^2} z \Big)\Bigg) \mathcal{N}\Big( z \Big| -\frac{\langle w, w^*\rangle_\Sigma}{2}, \|w\|_\Sigma^2\Big) dz \, .
\end{align*}
Taking $u = (z \! - \! \frac{\langle w, w^*\rangle_\Sigma}{2})/\|w\|_\Sigma$ in the first integral and $u = (z \! + \! \frac{\langle w, w^*\rangle_\Sigma}{2})/\|w\|_\Sigma$ in the second, we get
\begin{align*}
    \mathcal{R}(w) = &\frac{1}{2} \int e_\ell \Bigg(\sigma\Big(\frac{\langle w, w^*\rangle_\Sigma}{\|w\|_\Sigma}u + \frac{\langle w, w^*\rangle_\Sigma^2}{2 \|w\|_\Sigma^2}\Big)\Bigg) \mathcal{N}(u | 0, 1) du\\
    &+ \frac{1}{2} \int e_\ell \Bigg(\sigma\Big(\frac{\langle w, w^*\rangle_\Sigma}{\|w\|_\Sigma}u - \frac{\langle w, w^*\rangle_\Sigma^2}{2 \|w\|_\Sigma^2}\Big)\Bigg) \mathcal{N}(u | 0, 1) du \, .
\end{align*}
$\sigma$ is anti-symmetric around zero, assuming $e_\ell$ symmetric around $\frac{1}{2}$ the two terms are equal, which simplifies the expression,
\begin{equation*}
    \mathcal{R}(w) = \mathbb{E}_{z \sim \mathcal{N}(0,1)}\Big[ e_\ell \Big( \sigma\Big(\frac{\langle w, w^*\rangle_\Sigma}{\|w\|_\Sigma} \Big( z + \frac{\langle w, w^*\rangle_\Sigma}{2 \|w\|_\Sigma}\Big) \Big) \Big) \Big] \, .
\end{equation*}

Similarly for the calibration error
\begin{align*}
    \mathcal{K}(w) &= \mathbb{E}_{f(X)}[d_\ell(f(X), \mathbb{E}[Y|f(X)])]\\
    &= \frac{1}{2} \int d_\ell \Big(\sigma(z), \sigma\Big(\frac{\langle w, w^*\rangle_\Sigma}{\|w\|_\Sigma^2} z \Big)\Big) \mathcal{N}\Big(z \Big| \frac{\langle w, w^*\rangle_\Sigma}{2}, \|w\|_\Sigma^2 \Big) dz\\
    &+  \frac{1}{2} \int d_\ell \Big(\sigma(z), \sigma\Big(\frac{\langle w, w^*\rangle_\Sigma}{\|w\|_\Sigma^2} z \Big)\Big) \mathcal{N}\Big( z \Big| -\frac{\langle w, w^*\rangle_\Sigma}{2}, \|w\|_\Sigma^2\Big) dz\\
    &= \frac{1}{2} \int d_\ell \Big(\sigma\Big(\|w\|_\Sigma z + \frac{\langle w, w^*\rangle_\Sigma}{2}\Big), \sigma\Big(\frac{\langle w, w^*\rangle_\Sigma}{\|w\|_\Sigma} z + \frac{\langle w, w^*\rangle_\Sigma^2}{2\|w\|_\Sigma^2}\Big)\Bigg) \mathcal{N}(z | 0, 1) dz \\
    &+ \frac{1}{2} \int d_\ell \Big(\sigma\Big(\|w\|_\Sigma z - \frac{\langle w, w^*\rangle_\Sigma}{2}\Big), \sigma\Big(\frac{\langle w, w^*\rangle_\Sigma}{\|w\|_\Sigma} z - \frac{\langle w, w^*\rangle_\Sigma^2}{2\|w\|_\Sigma^2}\Big)\Big) \mathcal{N}(z | 0, 1) dz
\end{align*}
$\sigma$ is anti-symmetric around zero, assuming $d_\ell$ symmetric around $\frac{1}{2}$ the two terms are equal,
\begin{equation*}
    \mathcal{K}(w) = \mathbb{E}_{z\sim \mathcal{N}(0,1)}\Big[ d_\ell \Big( \sigma\Big( \|w\|_\Sigma \Big( z + \frac{\langle w, w^*\rangle_\Sigma}{2\|w\|_\Sigma} \Big) \Big), \sigma\Big( \frac{\langle w, w^*\rangle_\Sigma}{\|w\|_\Sigma} \Big( z + \frac{\langle w, w^*\rangle_\Sigma}{2\|w\|_\Sigma} \Big) \Big) \Big) \Big] \, .
\end{equation*}
\end{proof}

\thmTemperatureScaling*

\begin{proof}
$a_w = \frac{\langle w, w^*\rangle_\Sigma}{\|w\|_\Sigma}$ is invariant by rescaling. Given $s \in \mathbb{R}, w_s = sw$, we have that $a_{w_s} = \frac{\langle sw, w^*\rangle_\Sigma}{\|sw\|_\Sigma} = \frac{s\langle w, w^*\rangle_\Sigma}{s\|w\|_\Sigma} = a_w$. The expression of refinement error in \Cref{theorem:CalRefError} depends on $w$ only via $a_w$ so the refinement is invariant by rescaling.

The calibration error writes:
$$
    \mathcal{K}(w) = \mathbb{E}_{z\sim \mathcal{N}(0,1)}\Big[ d_\ell \Big( \sigma\Big( \|w\|_\Sigma \Big( z + \frac{a_w}{2} \Big) \Big), \sigma\Big( a_w \Big( z + \frac{a_w}{2} \Big) \Big) \Big) \Big]
$$
Since $a_w$ is invariant by rescaling, taking $w_s = \frac{\langle w, w^*\rangle_\Sigma}{\|w\|^2_\Sigma} w$ yields
\begin{align*}
    \mathcal{K}(w_s) &= \mathbb{E}_{z\sim \mathcal{N}(0,1)}\Big[ d_\ell \Big( \sigma\Big( \frac{\langle w, w^*\rangle_\Sigma}{\|w\|^2_\Sigma} \|w\|_\Sigma \Big( z + \frac{a_w}{2} \Big) \Big), \sigma\Big( a_w \Big( z + \frac{a_w}{2} \Big) \Big) \Big) \Big]\\
    &= \mathbb{E}_{z\sim \mathcal{N}(0,1)}\Big[ d_\ell \Big( \sigma\Big( a_w \Big( z + \frac{a_w}{2} \Big) \Big), \sigma\Big( a_w \Big( z + \frac{a_w}{2} \Big) \Big) \Big) \Big]\\
    &=0 \quad \text{since $\ell$ is a proper score ($d_\ell(p,p) = 0$).}
\end{align*}

\end{proof}


\section{Proofs for \Cref{section:HighDimensionalLogisticRegression}} \label{appendix:BetaSpectral}

For this section, we rely heavily on the following result by \citet{MaiHighDimLR}, on the limit distribution of the weight vector for regularized logistic regression, with regularization strength $\lambda$, when $n,p \rightarrow \infty$ with a fixed ratio $p/n \rightarrow r$:
$$
w_\lambda \sim \mathcal{N}\Big(\eta (\lambda I_p + \tau \Sigma)^{-1} \mu, \frac{\gamma}{n} (\lambda I_p + \tau \Sigma)^{-1} \Sigma (\lambda I_p + \tau \Sigma)^{-1} \Big) .
$$

\propAsymptoticDistributions*

\begin{proof}
$\langle w_\lambda, w^* \rangle_\Sigma$ follows a normal distribution with mean
$2\eta \mu^\top  (\lambda I_p + \tau \Sigma)^{-1} \mu$ and variance $\frac{4\gamma}{n} \mu^\top  (\lambda I_p + \tau \Sigma)^{-1} \Sigma (\lambda I_p + \tau \Sigma)^{-1} \mu$.
Looking at the variance first:
$$
    \mathbb{V}(\langle w_\lambda, w^* \rangle_\Sigma) = \frac{4\gamma}{n} \sum_{i=1}^p \frac{\sigma_i \mu_i^2}{(\lambda + \tau \sigma_i)^2} \quad .
$$
Using independence between $\mu_i$ and $\sigma_i$ and the strong law of large numbers
\begin{align*}
    \mathbb{V}(\langle w_\lambda, w^* \rangle_\Sigma) \xrightarrow{p \rightarrow \infty} &\frac{4\gamma}{n} p \mathbb{E}[\mu_i^2] \mathbb{E}\Big[\frac{\sigma_i}{(\lambda + \tau \sigma_i)^2} \Big] \\
    = &\frac{4 \gamma c^2}{n} \mathbb{E}\Big[\frac{\sigma_i}{(\lambda + \tau \sigma_i)^2} \Big] \xrightarrow{n \rightarrow \infty} 0 \quad .
\end{align*}
So $\langle w_\lambda, w^* \rangle_\Sigma$ concentrates on its mean:
$$
    \mathbb{E}(\langle w_\lambda, w^* \rangle_\Sigma) = 2 \eta \sum_{i=1}^p \frac{\mu_i^2}{\lambda + \tau \sigma_i} \xrightarrow{p\rightarrow \infty} 2\eta p \mathbb{E}[\mu_i^2] \mathbb{E}\Big[\frac{1}{\lambda + \tau \sigma_i}\Big] = 2 \eta c^2 \mathbb{E}\Big[\frac{1}{\lambda + \tau \sigma_i}\Big] \quad .
$$
Studying $w_\lambda^\top  \Sigma w_\lambda$ requires a bit more work. Denoting $\varepsilon = \mathcal{N}(0,1)$,
\begin{align*}
    w_\lambda^\top  \Sigma w_\lambda &\sim \sum_{i=1}^p \sigma_i {w_\lambda}_i^2 \\
    &= \sum_{i=1}^p \sigma_i \Big(\frac{\eta \mu_i}{\lambda + \tau \sigma_i} + \varepsilon \sqrt{\frac{\gamma \sigma_i}{n(\lambda + \tau \sigma_i)^2}} \Big)^2 \\
    &= \sum_{i=1}^p  \frac{\eta^2 \mu_i^2 \sigma_i}{(\lambda + \tau \sigma_i)^2} + \sum_{i=1}^p \frac{2 \eta \mu_i \sqrt{\gamma} \sigma_i^{3/2}}{\sqrt{n}(\lambda + \tau \sigma_i)^2} \varepsilon + \sum_{i=1}^p \frac{\gamma \sigma_i^2}{n(\lambda + \tau \sigma_i)^2} \varepsilon^2\\
    &\xrightarrow{p\rightarrow \infty} p \eta^2 \frac{c^2}{p} \mathbb{E}\Big[\frac{\sigma}{(\lambda + \tau \sigma)^2}\Big] + 0 + p \frac{\gamma}{n} \mathbb{E}\Big[ \frac{\sigma^2}{(\lambda+\tau\sigma)^2} \Big]\\
    &= \eta^2 c^2 \mathbb{E}\Big[\frac{\sigma}{(\lambda + \tau \sigma)^2}\Big] + \gamma r \mathbb{E}\Big[ \frac{\sigma^2}{(\lambda+\tau\sigma)^2} \Big] \quad .
\end{align*}
\end{proof}

\propOptimalErrorRate*

\begin{proof}
$$
    \err(w^*) = \Phi(-\frac{\langle w^*,w^* \rangle_\Sigma}{2\|w^*\|_\Sigma}) = \Phi(-\frac{\sqrt{{w^*}^\top  \Sigma w^*}}{2}) = \Phi(-\sqrt{\mu^\top  \Sigma^{-1} \mu}) \xrightarrow{a.s} \Phi(-\sqrt{p \mathbb{E}[\mu^2] \mathbb{E}[\frac{1}{\sigma}]}) = \Phi(-c \sqrt{\mathbb{E}[\sigma^{-1}]})
$$
\end{proof}

Deriving formulas for $\err(w^*)$, $\langle w_\lambda, w^* \rangle_\Sigma$ and $\|w_\lambda \|_\Sigma$
\begin{align*}
    \mathbb{E}[\sigma^{-1}] &= \int_0^1 \frac{1}{u + \varepsilon} \frac{u^{\alpha-1} (1-u)^{\beta-1}}{B(\alpha, \beta)} du\\
    &= \frac{1}{\varepsilon B(\alpha, \beta)} \int_0^1 (1 + \frac{u}{\varepsilon})^{-1} u^{\alpha-1} (1-u)^{\beta-1} du\\
    &= \frac{{}_2 F_1(1, \alpha, \alpha+\beta, -\frac{1}{\varepsilon})}{\varepsilon} \quad .
\end{align*}

Altogether
$$
\err(w^*) = \Phi \Bigg(- c \sqrt{ \frac{1}{\varepsilon} {}_2 F_1(1, \alpha, \alpha+\beta, -\frac{1}{\varepsilon})} \Bigg) \quad .
$$

\begin{lemma}\label{lemma:BetaFormulas}
Denoting ${}_2 F_1$ the hyper-geometric function, $\mathcal{B}$ the Beta distribution and $B$ the beta function,
\begin{align*}
    \mathbb{E}_{u \sim \mathcal{B}(\alpha, \beta)}\Big[ \frac{1}{\lambda + \tau(\varepsilon + u)} \Big] &= \int_0^1 \frac{1}{\lambda + \tau(\varepsilon + u)} \frac{u^{\alpha-1} (1-u)^{\beta-1}}{B(\alpha, \beta)} du\\
    &= \frac{1}{(\lambda + \tau \varepsilon) B(\alpha, \beta)} \int_0^1 (1 + \frac{\tau}{\lambda + \tau \varepsilon} u)^{-1} u^{\alpha-1} (1-u)^{\beta-1} du\\
    &= \frac{{}_2 F_1(1, \alpha, \alpha + \beta, -(\frac{\lambda}{\tau}+\varepsilon)^{-1})}{\lambda + \tau \varepsilon}
\end{align*}

\begin{align*}
    \mathbb{E}_{u \sim \mathcal{B}(\alpha, \beta)}\Big[ \frac{1}{(\lambda + \tau(\varepsilon + u))^2} \Big] &= \int_0^1 \frac{1}{(\lambda + \tau(\varepsilon + u))^2} \frac{u^{\alpha-1} (1-u)^{\beta-1}}{B(\alpha, \beta)} du\\
    &= \frac{1}{(\lambda + \tau \varepsilon)^2 B(\alpha, \beta)} \int_0^1 (1 + \frac{\tau}{\lambda + \tau \varepsilon} u)^{-2} u^{\alpha-1} (1-u)^{\beta-1} du\\
    &= \frac{{}_2 F_1(2, \alpha, \alpha+\beta, -(\frac{\lambda}{\tau}+\varepsilon)^{-1})}{(\lambda + \tau \varepsilon)^2}
\end{align*}

\begin{align*}
    \mathbb{E}_{u \sim \mathcal{B}(\alpha, \beta)}\Big[ \frac{u}{(\lambda + \tau(\varepsilon + u))^2} \Big] &= \int_0^1 \frac{u}{(\lambda + \tau(\varepsilon + u))^2} \frac{u^{\alpha-1} (1-u)^{\beta-1}}{B(\alpha, \beta)} du\\
    &= \frac{1}{(\lambda + \tau \varepsilon)^2 B(\alpha, \beta)} \int_0^1 (1 + \frac{\tau}{\lambda + \tau \varepsilon} u)^{-2} u^{\alpha} (1-u)^{\beta-1} du\\
    &= \frac{B(\alpha+1, \beta)}{B(\alpha, \beta)(\lambda + \tau \varepsilon)^2} {}_2 F_1(2, \alpha+1, \alpha+\beta+1, -(\frac{\lambda}{\tau}+\varepsilon)^{-1})
\end{align*}

\begin{align*}
    \mathbb{E}_{u \sim \mathcal{B}(\alpha, \beta)}\Big[ \frac{u^2}{(\lambda + \tau(\varepsilon + u))^2} \Big] &= \int_0^1 \frac{u^2}{(\lambda + \tau(\varepsilon + u))^2} \frac{u^{\alpha-1} (1-u)^{\beta-1}}{B(\alpha, \beta)} du\\
    &= \frac{1}{(\lambda + \tau \varepsilon)^2 B(\alpha, \beta)} \int_0^1 (1 + \frac{\tau}{\lambda + \tau \varepsilon} u)^{-2} u^{\alpha+1} (1-u)^{\beta-1} du\\
    &= \frac{B(\alpha+2, \beta)}{B(\alpha, \beta)(\lambda + \tau \varepsilon)^2} {}_2 F_1(2, \alpha+2, \alpha+\beta+2, -(\frac{\lambda}{\tau}+\varepsilon)^{-1}) \quad .
\end{align*}
\end{lemma}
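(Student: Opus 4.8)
The plan is to derive all four identities from a single tool: the Euler integral representation of the Gauss hypergeometric function,
\[
{}_2F_1(a, b; c; z) = \frac{1}{B(b, c-b)} \int_0^1 t^{b-1}(1-t)^{c-b-1}(1-zt)^{-a}\,dt, \qquad c > b > 0,
\]
valid for all $z \in \mathbb{C} \setminus [1, \infty)$ by analytic continuation. Since $\varepsilon > 0$ and $\lambda, \tau \geq 0$, the relevant argument $z \coloneqq -\tfrac{\tau}{\lambda+\tau\varepsilon} = -(\tfrac{\lambda}{\tau}+\varepsilon)^{-1}$ is non-positive, so the representation applies and all integrals below converge absolutely. (The same idea, with $a=1$, already produced the formula for $\mathbb{E}[\sigma^{-1}]$ used just above to express $e^*$.)

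First I would treat the two ``numerator-free'' formulas simultaneously. Writing $\lambda + \tau(\varepsilon + u) = (\lambda+\tau\varepsilon)\bigl(1 + \tfrac{\tau}{\lambda+\tau\varepsilon}u\bigr)$ and inserting the Beta density,
\[
\mathbb{E}_{u\sim\mathcal{B}(\alpha,\beta)}\Bigl[\tfrac{1}{(\lambda+\tau(\varepsilon+u))^a}\Bigr] = \frac{1}{(\lambda+\tau\varepsilon)^a B(\alpha,\beta)}\int_0^1 (1 - zu)^{-a}\, u^{\alpha-1}(1-u)^{\beta-1}\,du,
\]
and the integral is exactly $B(\alpha,\beta)\,{}_2F_1(a, \alpha; \alpha+\beta; z)$ by the Euler formula with $b=\alpha$, $c=\alpha+\beta$. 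Setting $a=1$ and $a=2$ gives the first and second identities.

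Next I would handle the formulas carrying an extra factor $u$ or $u^2$. Absorbing $u^m$ into the Beta weight turns $u^{\alpha-1}(1-u)^{\beta-1}$ into $u^{\alpha+m-1}(1-u)^{\beta-1}$, i.e.\ an unnormalized $\mathcal{B}(\alpha+m,\beta)$ density, so
\[
\mathbb{E}_{u\sim\mathcal{B}(\alpha,\beta)}\Bigl[\tfrac{u^m}{(\lambda+\tau(\varepsilon+u))^2}\Bigr] = \frac{B(\alpha+m,\beta)}{B(\alpha,\beta)(\lambda+\tau\varepsilon)^2}\,{}_2F_1\bigl(2, \alpha+m; \alpha+\beta+m; z\bigr),
\]
again by the Euler formula, now with $b=\alpha+m$, $c=\alpha+\beta+m$, $a=2$. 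Taking $m=1$ and $m=2$ yields the third and fourth identities.

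There is no genuinely hard step here; the proof is essentially bookkeeping of the hypergeometric parameters under the shift $\alpha \mapsto \alpha+m$. The only point requiring care is checking the validity of the Euler representation in each case: one must verify $c > b > 0$ (guaranteed by $\alpha,\beta > 0$, even after the shift) and $z \notin [1,\infty)$ (guaranteed by $\varepsilon > 0$, which keeps $z$ non-positive), so that no convergence or branch-cut issues arise.
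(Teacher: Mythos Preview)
Your proposal is correct and follows essentially the same approach as the paper: factor $\lambda+\tau(\varepsilon+u)=(\lambda+\tau\varepsilon)(1-zu)$ with $z=-(\tfrac{\lambda}{\tau}+\varepsilon)^{-1}$, then recognise the resulting integral as Euler's integral representation of ${}_2F_1$, shifting $\alpha\mapsto\alpha+m$ when an extra $u^m$ appears in the numerator. Your version is slightly cleaner in that it treats the four cases uniformly via the parameters $(a,m)$ and explicitly checks the validity conditions $c>b>0$ and $z\notin[1,\infty)$, which the paper leaves implicit.
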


Using the equations in \Cref{lemma:BetaFormulas} and the linearity of the expectation we get that
$$
\langle w_\lambda, w^* \rangle_\Sigma \xrightarrow{(n,p) \rightarrow \infty} 2 \eta c^2 \mathbb{E}\Big[\frac{1}{\lambda + \tau \sigma}\Big] = 2 \eta c^2 \frac{{}_2 F_1(1, \alpha, \alpha + \beta, -(\frac{\lambda}{\tau}+\varepsilon)^{-1})}{\lambda + \tau \varepsilon} \quad .
$$

Similarly,
\begin{align*}
    \|w_\lambda\|_\Sigma^2 &\xrightarrow{(n,p) \rightarrow \infty} \eta^2 c^2 \mathbb{E}\Big[\frac{\sigma}{(\lambda+\tau \sigma)^2}\Big] + \gamma r \mathbb{E}\Big[ \frac{\sigma^2}{(\lambda + \tau \sigma)^2} \Big]\\
    &= \frac{\eta^2 c^2}{(\lambda + \tau \varepsilon)^2} \Bigg( \varepsilon {}_2 F_1(2, \alpha, \alpha+\beta, -(\frac{\lambda}{\tau}+\varepsilon)^{-1}) + \frac{B(\alpha+1, \beta)}{B(\alpha, \beta)} {}_2 F_1(2, \alpha+1, \alpha+\beta+1, -(\frac{\lambda}{\tau}+\varepsilon)^{-1}) \Bigg)\\
    &+ \frac{\gamma r}{(\lambda + \tau \varepsilon)^2} \Bigg( \frac{B(\alpha+2, \beta)}{B(\alpha, \beta)} {}_2 F_1(2, \alpha+2, \alpha+\beta+2, -(\frac{\lambda}{\tau}+\varepsilon)^{-1})\\
    &+ 2 \varepsilon \frac{B(\alpha+1, \beta)}{B(\alpha, \beta)} {}_2 F_1(2, \alpha+1, \alpha+\beta+1, -(\frac{\lambda}{\tau}+\varepsilon)^{-1}) + \varepsilon^2  {}_2 F_1(2, \alpha, \alpha+\beta, -(\frac{\lambda}{\tau}+\varepsilon)^{-1}) \Bigg) \quad .
\end{align*}

Solving the non-linear system in \cite{MaiHighDimLR} also requires computing $\kappa$:
\begin{align*}
    \kappa &\xrightarrow{(n,p) \rightarrow \infty} r \mathbb{E}\Big[ \frac{\sigma}{\lambda - e\sigma} \Big]\\
    &= r\mathbb{E}_{u \sim \mathcal{B}(\alpha, \beta)}\Big[ \frac{u + \varepsilon}{\lambda - e(\varepsilon + u)} \Big]\\
    &= r \Bigg(\int_0^1 \frac{u}{\lambda - e(\varepsilon + u)} \frac{u^{\alpha-1} (1-u)^{\beta-1}}{B(\alpha, \beta)} du + \varepsilon \int_0^1 \frac{1}{\lambda - e(\varepsilon+u)} \frac{u^{\alpha-1} (1-u)^{\beta-1}}{B(\alpha, \beta)} du \Bigg)\\
    &= \frac{r}{B(\alpha, \beta)(\lambda - e \varepsilon)} \Bigg( \int_0^1 (1 - \frac{e}{\lambda - e \varepsilon} u)^{-1} u^{\alpha} (1-u)^{\beta-1} du + \varepsilon \int_0^1 (1 - \frac{e}{\lambda - e \varepsilon} u)^{-1} u^{\alpha-1} (1-u)^{\beta-1} du \Bigg)\\
    &= \frac{r}{\lambda - e \varepsilon} \Bigg( \frac{B(\alpha+1, \beta)}{B(\alpha, \beta)} {}_2 F_1(1, \alpha+1, \alpha+\beta+1, (\frac{\lambda}{e}-\varepsilon)^{-1}) + \varepsilon \cdot {}_2 F_1(1, \alpha, \alpha+\beta, (\frac{\lambda}{e}-\varepsilon)^{-1}) \Bigg) \quad .
\end{align*}

We provide an efficient implementation of the non-linear system solver with our particular data model and functions to compute the resulting calibration and refinement errors at \url{github.com/eugeneberta/RefineThenCalibrate-Theory}.


\section{Temperature scaling implementation and TS-refinement} \label{appendix:TemperatureScaling}

For the logloss, the objective $\calL(\beta) = \sum_{(x, y) \in \calD} -y^\top \log(\softmax(\beta \log(f(x)))$ is convex, which follows from the fact that $-\log(\softmax(x)) = \logsumexp(x) - x$ is convex \citep{boyd2004convex}.
Specifically, we use 30 bisection steps to approximate $b^* \coloneqq \argmin_{b \in [-16, 16]} \calL(\exp(b))$ and then find $\beta^* \coloneqq \exp(b^*)$.
If even faster implementations are desired, more advanced line search for convex functions \citep{orseau2023line}, second-order methods, stochastic optimization, or warm-starting could be helpful.
To guard against overfitting in cases with 100\% validation accuracy, where bisection can efficiently approach $\beta^* = \infty$, we introduce a form of Laplace smoothing (LS) as 
$g(p) \coloneqq \frac{\Ncal}{\Ncal+1} g_{\beta^*}(p) + \frac{1}{\Ncal+1} u$,
where $u$ is the uniform distribution and $\Ncal$ is the number of samples in the calibration set. This version is used for our main experiments except for \Cref{fig:calib-benchmark}.

\paragraph{Extended comparison.} \Cref{fig:calib-benchmark-extra} shows a comparison of more post-hoc calibration methods. While \Cref{fig:calib-benchmark} uses mean logloss for simplicity, this can over-emphasize multi-class datasets while imbalanced datasets get less weight. To compensate for this, in \Cref{fig:calib-benchmark-extra} we divide the loss (logloss or Brier loss) for each dataset by the loss of the best constant predictor, which can be written as $e_\ell(\bbE[Y])$.

Our inclusion of Laplace smoothing (LS) to TS brings a small benefit for logloss. We also compare our results to isotonic regression, where we include LS to avoid infinite logloss since IR can predict a probability of zero.\footnote{IVAP \citep{vovk2015large} is a variant of isotonic regression that does not predict zero probabilities but we found it to be roughly two orders of magnitude slower than isotonic regression, so we did not consider it very attractive as a component of a refinement estimator.} While the IR implementation from scikit-learn \citep{pedregosa2011scikit} is almost as fast as our TS implementation, it performs considerably worse on our benchmark. For Brier loss, the benefit of TS in general is less pronounced, and only good implementations improve results compared to no post-hoc calibration.

\begin{figure}[tb]
    \centering
    \includegraphics[width=0.48\linewidth]{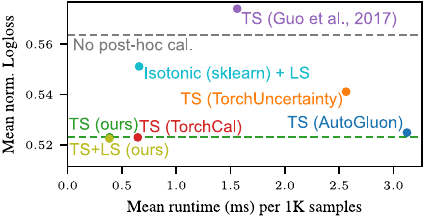}
    \includegraphics[width=0.48\linewidth]{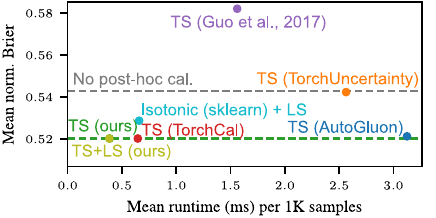}
    \caption{\textbf{Runtime vs.\ mean benchmark scores of different TS implementations and isotonic regression for logloss (left) and Brier loss (right).} Runtimes are averaged over validation sets with at least 10K samples. The evaluation is on XGBoost models trained with default parameters, using the epoch with the best validation accuracy. The $y$-axis shows the mean of normalized logloss (left) or Brier loss (right), where normalizing means dividing by the loss $e_\ell(\bbE[Y])$ of the best constant predictor on each dataset.}
    \label{fig:calib-benchmark-extra}
\end{figure}

\paragraph{Validation.} We are interested in minimizing the population risk of calibrated estimators. To faithfully estimate the population risk, it is possible to use cross-validation or the holdout method, fitting the calibrator on a part of the validation set while using its loss on the remaining part as a refinement estimate. However, we experimentally observe in \Cref{fig:tabular_more_metrics} that fitting and evaluating temperature scaling on the entire validation set works equally well, perhaps because it only fits a single parameter. We use the latter approach in the following since it is more efficient compared to cross-validation.


\section{Computer vision experiments} \label{appendix:ComputerVision}

We train a ResNet-18 \citep{he2016} and WideResNet \citep{zagoruyko2016} on CIFAR-10, CIFAR-100 and SVHN datasets \citep{krizhevsky2009, netzer2011}. We train for 300 epochs using SGD with momentum $=0.9$, weight decay $=10^{-4}$ and a learning rate scheduler that divides the learning rate by ten every 100 epochs. We use random cropping ($32\times32$ crops after paddind images by $4$ pixels), horizontal flips (pytorch default) and cutout \citep{devries2017} (one $16\times16$ hole per image during training). We train each model ten times on each dataset. 10\% of the training set is kept for validation. We select the best epoch based on different metrics evaluated on the validation set: logloss, Brier score, accuracy and logloss after temperature scaling (TS-refinement). On \Cref{tab:ResNet-18} (ResNet-18) and \Cref{tab:WideResNet} (WideResNet) we report average logloss, brier score, accuracy, ECE and smooth ECE \citep{blasiok2024smooth} obtained for each stopping metric. \Cref{figure:TrainingCurve} is generated with the same training recipe with a cosine annealing scheduler instead to avoid big jumps in the loss landscape and no weight decay to exacerbate the effect of calibration overfitting. See \url{github.com/eugeneberta/RefineThenCalibrate-Vision}.

\begin{table}[h]
\centering
\caption{Benchmark results for ResNet-18. We plot means obtained over the 10 runs and 95\% confidence intervals computed using the t-distribution. Stopping metrics are evaluated on the validation set (10\% of training data). All metrics are reported for the best model as selected by the corresponding stopping metric, after TS (fitted using the validation set).}
\begin{tabular}{l l c c c c c}
\toprule
& & Logloss & Brier & Accuracy & ECE & smECE \\
Dataset & Stopping metric & & & \\
\midrule
\multirow{4}{*}{\footnotesize CIFAR-10} & Logloss & $0.161$ \footnotesize{$\pm 0.004$} & $0.080$ \footnotesize{$\pm 0.002$} & $94.6\%$ \footnotesize{$\pm 0.2$} & $\bf 0.006$ \footnotesize{$\pm 0.001$}& $\bf 0.008$ \footnotesize{$\pm 0.001$} \\
                          & Brier score & $\bf 0.154$ \footnotesize{$\pm 0.005$} & $\bf 0.073$ \footnotesize{$\pm 0.003$} & $\bf 95.1\%$ \footnotesize{$\pm 0.2$} & $0.007$ \footnotesize{$\pm 0.001$}& $\bf 0.008$ \footnotesize{$\pm 0.001$} \\
                          & Accuracy & $0.155$ \footnotesize{$\pm 0.005$} & $0.074$ \footnotesize{$\pm 0.003$} & $\bf 95.1\%$ \footnotesize{$\pm 0.2$} & $0.007$ \footnotesize{$\pm 0.001$}& $0.009$ \footnotesize{$\pm 0.001$} \\
                          & TS-refinement & $\bf 0.154$ \footnotesize{$\pm 0.005$} & $0.074$ \footnotesize{$\pm 0.003$} & $\bf 95.1\%$ \footnotesize{$\pm 0.2$} & $0.007$ \footnotesize{$\pm 0.002$}& $\bf 0.008$ \footnotesize{$\pm 0.001$} \\
\midrule
\multirow{4}{*}{\footnotesize CIFAR-100} & Logloss & $1.015$ \footnotesize{$\pm 0.008$} & $0.367$ \footnotesize{$\pm 0.003$} & $\bf 73.2\%$ \footnotesize{$\pm 0.3$} & $0.019$ \footnotesize{$\pm 0.002$}& $0.019$ \footnotesize{$\pm 0.001$} \\
                          & Brier score & $1.014$ \footnotesize{$\pm 0.007$} & $0.367$ \footnotesize{$\pm 0.002$} & $73.1\%$ \footnotesize{$\pm 0.3$} & $0.020$ \footnotesize{$\pm 0.002$}& $0.019$ \footnotesize{$\pm 0.001$} \\
                          & Accuracy & $1.001$ \footnotesize{$\pm 0.012$} & $\bf 0.366$ \footnotesize{$\pm 0.003$} & $73.1\%$ \footnotesize{$\pm 0.3$} & $0.018$ \footnotesize{$\pm 0.001$}& $0.018$ \footnotesize{$\pm 0.001$} \\
                          & TS-refinement & $\bf 0.983$ \footnotesize{$\pm 0.008$} & $\bf 0.366$ \footnotesize{$\pm 0.003$} & $73.0\%$ \footnotesize{$\pm 0.2$} & $\bf 0.016$ \footnotesize{$\pm 0.001$}& $\bf 0.016$ \footnotesize{$\pm 0.001$} \\
\midrule
\multirow{4}{*}{\footnotesize SVHN} & Logloss & $0.120$ \footnotesize{$\pm 0.003$} & $\bf 0.048$ \footnotesize{$\pm 0.001$} & $96.9\%$ \footnotesize{$\pm 0.1$} & $\bf 0.006$ \footnotesize{$\pm 0.001$}& $\bf 0.007$ \footnotesize{$\pm 0.001$} \\
                          & Brier score & $\bf 0.118$ \footnotesize{$\pm 0.002$} & $\bf 0.048$ \footnotesize{$\pm 0.001$} & $\bf 97.0\%$ \footnotesize{$\pm 0.1$} & $\bf 0.006$ \footnotesize{$\pm 0.001$}& $\bf 0.007$ \footnotesize{$\pm 0.001$} \\
                          & Accuracy & $0.124$ \footnotesize{$\pm 0.005$} & $0.049$ \footnotesize{$\pm 0.002$} & $96.9\%$ \footnotesize{$\pm 0.1$} & $\bf 0.006$ \footnotesize{$\pm 0.001$}& $0.008$ \footnotesize{$\pm 0.001$} \\
                          & TS-refinement & $0.119$ \footnotesize{$\pm 0.002$} & $\bf 0.048$ \footnotesize{$\pm 0.001$} & $\bf 97.0\%$ \footnotesize{$\pm 0.1$} & $\bf 0.006$ \footnotesize{$\pm 0.001$}& $\bf 0.007$ \footnotesize{$\pm 0.001$} \\
\bottomrule
\end{tabular}
\label{tab:ResNet-18}
\end{table}

\begin{table}[h]
\centering
\caption{Benchmark results for WideResNet. We plot means obtained over the 10 runs and 95\% confidence intervals computed using the t-distribution. Stopping metrics are evaluated on the validation set (10\% of training data). All metrics are reported for the best model as selected by the corresponding stopping metric, after TS (fitted using the validation set).}
\begin{tabular}{l l c c c c c}
\toprule
& & Logloss & Brier & Accuracy & ECE & smECE \\
Dataset & Stopping metric & & & \\
\midrule
\multirow{4}{*}{\footnotesize CIFAR-10} & Logloss & $0.123$ \footnotesize{$\pm 0.004$} & $0.061$ \footnotesize{$\pm 0.002$} & $95.9\%$ \footnotesize{$\pm 0.2$} & $\bf 0.006$ \footnotesize{$\pm 0.001$}& $\bf 0.008$ \footnotesize{$\pm 0.000$} \\
                          & Brier score & $0.121$ \footnotesize{$\pm 0.002$} & $\bf 0.056$ \footnotesize{$\pm 0.001$} & $\bf 96.4\%$ \footnotesize{$\pm 0.1$} & $\bf 0.006$ \footnotesize{$\pm 0.001$}& $\bf 0.008$ \footnotesize{$\pm 0.001$} \\
                          & Accuracy & $0.122$ \footnotesize{$\pm 0.002$} & $\bf 0.056$ \footnotesize{$\pm 0.001$} & $\bf 96.4\%$ \footnotesize{$\pm 0.1$} & $\bf 0.006$ \footnotesize{$\pm 0.001$}& $\bf 0.008$ \footnotesize{$\pm 0.001$} \\
                          & TS-refinement & $\bf 0.120$ \footnotesize{$\pm 0.002$} & $\bf 0.056$ \footnotesize{$\pm 0.001$} & $96.3\%$ \footnotesize{$\pm 0.1$} & $0.007$ \footnotesize{$\pm 0.001$}& $\bf 0.008$ \footnotesize{$\pm 0.001$} \\
\midrule
\multirow{4}{*}{\footnotesize CIFAR-100} & Logloss & $0.798$ \footnotesize{$\pm 0.007$} & $0.300$ \footnotesize{$\pm 0.002$} & $78.4\%$ \footnotesize{$\pm 0.3$} & $0.020$ \footnotesize{$\pm 0.002$}& $0.020$ \footnotesize{$\pm 0.002$} \\
                          & Brier score & $0.803$ \footnotesize{$\pm 0.003$} & $\bf 0.298$ \footnotesize{$\pm 0.001$} & $\bf 78.8\%$ \footnotesize{$\pm 0.1$} & $0.021$ \footnotesize{$\pm 0.002$}& $0.020$ \footnotesize{$\pm 0.001$} \\
                          & Accuracy & $0.791$ \footnotesize{$\pm 0.011$} & $0.300$ \footnotesize{$\pm 0.002$} & $78.6\%$ \footnotesize{$\pm 0.2$} & $0.020$ \footnotesize{$\pm 0.003$}& $0.019$ \footnotesize{$\pm 0.002$} \\
                          & TS-refinement & $\bf 0.771$ \footnotesize{$\pm 0.004$} & $\bf 0.298$ \footnotesize{$\pm 0.002$} & $78.5\%$ \footnotesize{$\pm 0.2$} & $\bf 0.016$ \footnotesize{$\pm 0.002$}& $\bf 0.016$ \footnotesize{$\pm 0.001$} \\
\midrule
\multirow{4}{*}{\footnotesize SVHN} & Logloss & $\bf 0.109$ \footnotesize{$\pm 0.002$} & $0.043$ \footnotesize{$\pm 0.001$} & $\bf 97.3\%$ \footnotesize{$\pm 0.1$} & $\bf 0.007$ \footnotesize{$\pm 0.001$}& $\bf 0.008$ \footnotesize{$\pm 0.001$} \\
                          & Brier score & $0.110$ \footnotesize{$\pm 0.003$} & $\bf 0.042$ \footnotesize{$\pm 0.001$} & $\bf 97.3\%$ \footnotesize{$\pm 0.1$} & $\bf 0.007$ \footnotesize{$\pm 0.001$}& $\bf 0.008$ \footnotesize{$\pm 0.001$} \\
                          & Accuracy & $0.112$ \footnotesize{$\pm 0.002$} & $\bf 0.042$ \footnotesize{$\pm 0.001$} & $\bf 97.3\%$ \footnotesize{$\pm 0.1$} & $0.008$ \footnotesize{$\pm 0.001$}& $0.009$ \footnotesize{$\pm 0.001$} \\
                          & TS-refinement & $\bf 0.109$ \footnotesize{$\pm 0.001$} & $\bf 0.042$ \footnotesize{$\pm 0.000$} & $\bf 97.3\%$ \footnotesize{$\pm 0.1$} & $\bf 0.007$ \footnotesize{$\pm 0.001$}& $\bf 0.008$ \footnotesize{$\pm 0.001$} \\
\bottomrule
\end{tabular}
\label{tab:WideResNet}
\end{table}


\section{Tabular experiments} \label{sec:appendix:tabular}

In the following, we provide more details and analyses for our tabular experiments. The experiments took around 40 hours to run on a workstation with a 32-core AMD Threadripper PRO 3975WX CPU and four NVidia RTX 3090 GPUs. The benchmark code is available at \url{github.com/dholzmueller/pytabkit}.

\subsection{Dataset size dependency}

\Cref{fig:gap_vs_ds_size} shows how the relative performance of refinement-based stopping vs logloss stopping depends on the dataset size. While the situation for small datasets is very noisy, for larger datasets the deterioration in the worst case is much less severe than the improvements in the best case. The advantages of TS-refinement become apparent at roughly 10K samples, which corresponds to a validation set size of 1,600. For XGB and MLP, there is one large outlier dataset where refinement-based stopping and logloss stopping both achieve low loss but the loss ratio is high.

\begin{figure}
    \centering
    \begin{minipage}{0.48\textwidth}
\centering
\includegraphics[width=\textwidth]{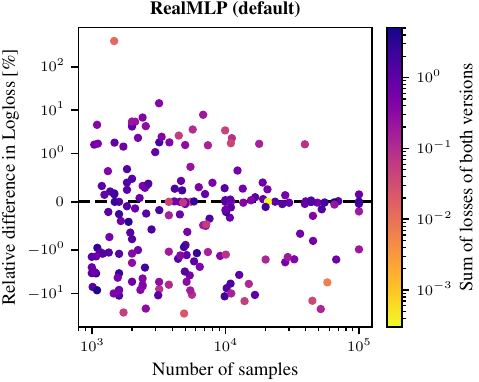}
\includegraphics[width=\textwidth]{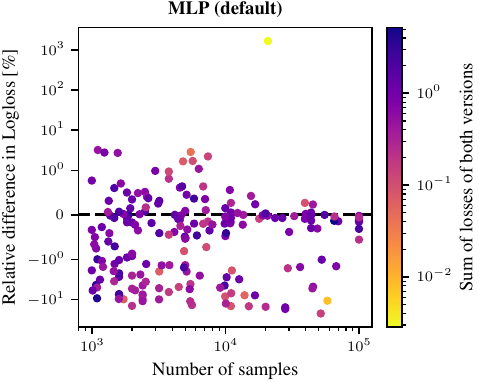}
\includegraphics[width=\textwidth]{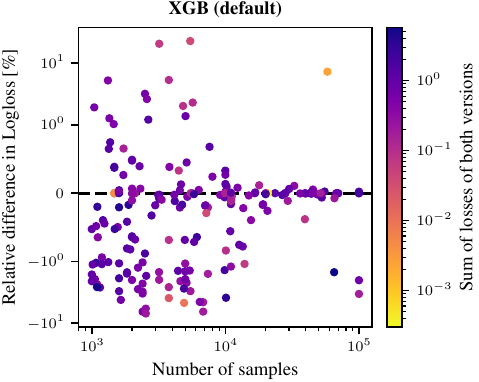}
    \end{minipage}
    \hspace{0.2cm}
    \begin{minipage}{0.48\textwidth}
\centering
\includegraphics[width=\textwidth]{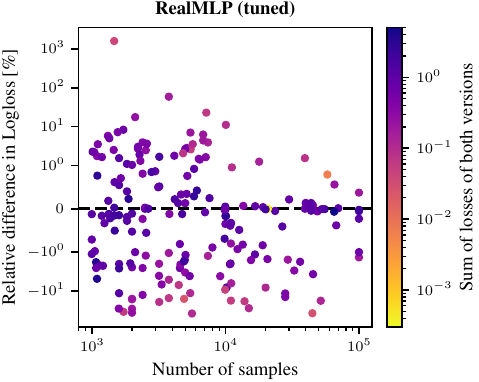}
\includegraphics[width=\textwidth]{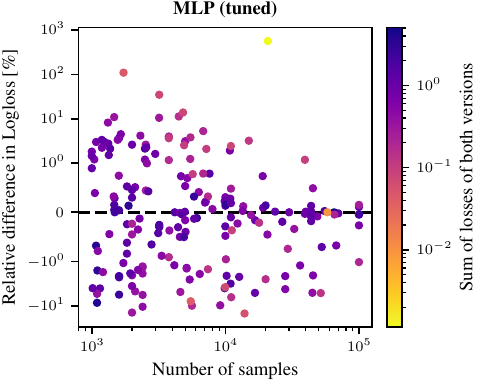}
\includegraphics[width=\textwidth]{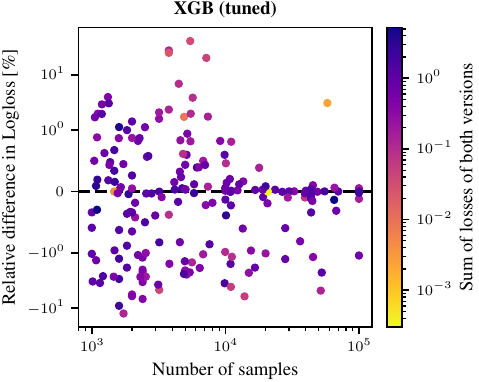}
    \end{minipage}
    \caption{\textbf{Relative differences in logloss of using TS-Refinement vs.\ Logloss for selecting the best epoch and hyperparameters.} Each method applies temperature scaling on the final model. Each dot represents one dataset. Values below zero mean that TS-refinement performs better. A light color indicates datasets where methods achieve very low loss.}
    \label{fig:gap_vs_ds_size}
\end{figure}

\subsection{Other stopping metrics}

In \Cref{fig:tabular_more_metrics}, we show results for more stopping metrics like AUROC, Brier loss, Brier loss after temperature scaling, and a 5-fold cross-validation version of TS-Refinement from \Cref{appendix:TemperatureScaling}. TS-Refinement and its more expensive cross-validation version perform best. Out of the other metrics, Brier loss before or after temperature scaling performs comparably to logloss, while AUROC performs worse than logloss for RealMLP. \Cref{fig:tabular_more_metrics_brier} shows the same stopping metrics on the Brier loss instead. Here, stopping on Brier loss and refinement-based metrics perform well.

\begin{figure}
    \centering
    \includegraphics[width=\linewidth]{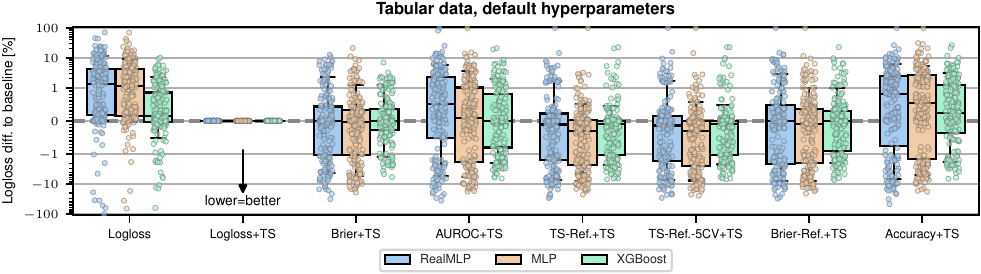}
    \caption{\textbf{Relative differences in logloss for different stopping methods compared to Logloss+TS.} Brier-Ref.\ refers to Brier loss after temperature scaling. AUROC uses the one-vs-rest version for multiclass datasets. TS-Ref.-5CV is the metric from \Cref{appendix:TemperatureScaling} computing the out-of-fold loss after temperature scaling in a five-fold crossvalidation setup. Methods labeled ``+TS'' use temperature scaling on the final model. Each dot represents one dataset from \citet{ye2024closer}, using only the 65 datasets with at least 10K samples. Percentages are clipped to $[-100, 100]$. Box-plots show the 10\%, 25\%, 50\%, 75\%, and 90\% quantiles.}
    \label{fig:tabular_more_metrics}
\end{figure}

\begin{figure}
    \centering
    \includegraphics[width=\linewidth]{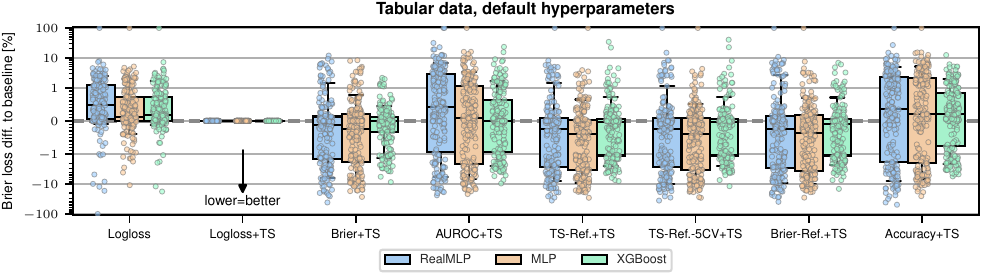}
    \caption{\textbf{Relative differences in \emph{Brier loss} for different stopping methods compared to Logloss+TS.} Brier-Ref.\ refers to Brier loss after temperature scaling. AUROC uses the one-vs-rest version for multiclass datasets. TS-Ref.-5CV is the metric from \Cref{appendix:TemperatureScaling} computing the out-of-fold loss after temperature scaling in a five-fold crossvalidation setup. Methods labeled ``+TS'' use temperature scaling on the final model. Each dot represents one dataset from \citet{ye2024closer}, using only the 65 datasets with at least 10K samples. Percentages are clipped to $[-100, 100]$. Box-plots show the 10\%, 25\%, 50\%, 75\%, and 90\% quantiles.}
    \label{fig:tabular_more_metrics_brier}
\end{figure}

\subsection{Effect on other metrics}

\Cref{fig:tabular_accuracy} shows that tuning TS-Refinement can, on average, improve downstream accuracy compared to tuning logloss, and sometimes even compared to tuning accuracy directly. The same holds for AUROC in \Cref{fig:tabular_auroc}, where tuning accuracy often performs very badly. Performing temperature scaling does not affect downstream accuracy, but can affect downstream AUROC on multiclass datasets.

\begin{figure}[tb]
    \centering
    \begin{minipage}{0.48\textwidth}
\centering
\includegraphics[width=\textwidth]{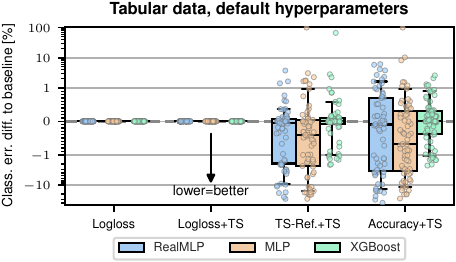}
    \end{minipage}
    \begin{minipage}{0.48\textwidth}
\centering
\includegraphics[width=\textwidth]{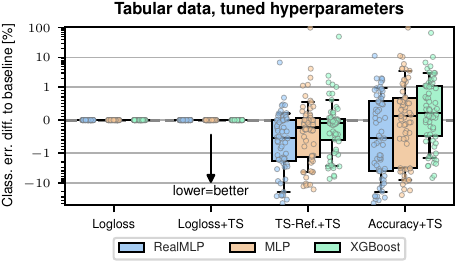}
    \end{minipage}

    \caption{\textbf{Differences in downstream \emph{classification error} between logloss + TS and other procedures on tabular datasets.} Each dot represents one dataset from \citet{ye2024closer}, using only the 65 datasets with at least 10K samples. Percentages are clipped to $[-100, 100]$. Box-plots show the 10\%, 25\%, 50\%, 75\%, and 90\% quantiles.} \label{fig:tabular_accuracy}
\end{figure}

\begin{figure}[tb]
    \centering
    \begin{minipage}{0.48\textwidth}
\centering
\includegraphics[width=\textwidth]{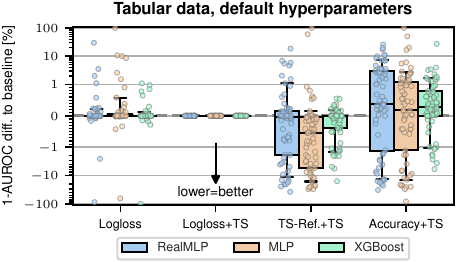}
    \end{minipage}
    \begin{minipage}{0.48\textwidth}
\centering
\includegraphics[width=\textwidth]{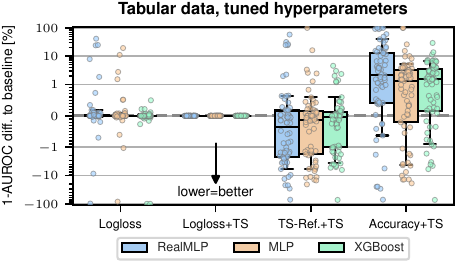}
    \end{minipage}

    \caption{\textbf{Differences in downstream \emph{AUROC} (one-vs-rest for multiclass datasets) between logloss + TS and other procedures on tabular datasets.} Each dot represents one dataset from \citet{ye2024closer}, using only the 65 datasets with at least 10K samples. Percentages are clipped to $[-100, 100]$. Box-plots show the 10\%, 25\%, 50\%, 75\%, and 90\% quantiles.} \label{fig:tabular_auroc}
\end{figure}



\subsection{Stopping times}

\Cref{fig:stopping_times} shows the best epochs or iterations found for different metrics for the models with default parameters. We can see several tendencies across models:
\begin{itemize}
    \item Logloss stops first on average.
    \item Brier loss stops later than logloss, also when considering the loss after temperature scaling.
    \item Loss after temperature scaling stopps later than loss before temperature scaling.
    \item Accuracy stops late and AUROC somewhere in the middle.
\end{itemize}

\begin{figure}
    \centering
    \begin{minipage}{0.48\textwidth}
        \subcaptionbox{RealMLP (default)}{
\centering
\includegraphics[width=\textwidth]{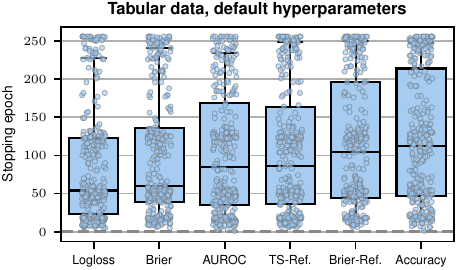}
}
    \end{minipage}
    \begin{minipage}{0.48\textwidth}
        \subcaptionbox{MLP (default)}{
\centering
\includegraphics[width=\textwidth]{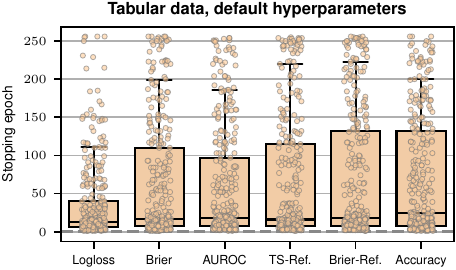}
}
    \end{minipage}
    \begin{minipage}{0.48\textwidth}
        \subcaptionbox{XGBoost (default)}{
\centering
\includegraphics[width=\textwidth]{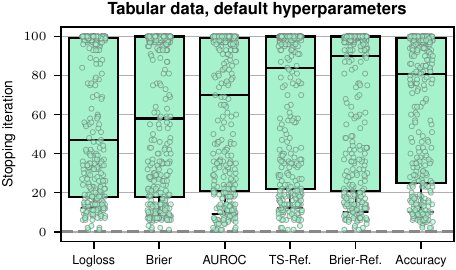}
}
    \end{minipage}
    \caption{\textbf{Best epochs/iterations in default models for different stopping metrics.} Each point represents one model trained on one of five training-validation splits on one of the 65 datasets with at least 10K samples. Brier-Ref.\ refers to Brier loss after temperature scaling. AUROC uses the one-vs-rest version for multiclass datasets. The multiple clusters in the RealMLP stopping epochs are likely due to the use of a multi-cycle loss schedule.}
    \label{fig:stopping_times}
\end{figure}

\subsection{Methods}

From a single training of a method, we extract results for the best epoch (for NNs) or iteration (for XGBoost) with respect to all considered metrics. To achieve this, we always train for the maximum number of epochs/iterations and do not stop the training early based on one of the metrics.

\paragraph{Default parameters.} For the MLP, we slightly simplify the default parameters of \cite{gorishniy2021revisiting} as shown in \Cref{table:mlp-d}. For XGBoost, we take the library default parameters. For RealMLP, we take the default parameters from \citet{holzmuller2024better} but deactivate label smoothing since it causes the loss to be non-proper.

\paragraph{Hyperparameter tuning.} For hyperparameter tuning, we employ 30 steps of random search, comparable to \citet{mcelfresh2024neural}. Using random search allows us to train the 30 random models once and then pick the best one for each metric. For the MLP, we choose a small search space, shown in \Cref{table:mlp-hpo} to cover the setting of a simple baseline. For XGBoost, we choose a typical, relatively large search space, shown in \Cref{table:xgb-hpo}. For RealMLP, we slightly modify the original search space from \citet{holzmuller2024better}.

\begin{table}[tb]
\centering
\caption{Hyperparameter search space for XGBoost, adapted from \citet{grinsztajn2022tree}. We use the \texttt{hist} method, which is the new default in XGBoost 2.0 and supports native handling of categorical values, while the old \texttt{auto} method selection is not available in XGBoost 2.0. We disable early stopping as we want to select the best epoch for different metrics in the same training run. Some search spaces are adapted based on previous experiences.} \label{table:xgb-hpo}
\begin{tabular}{cc}
\toprule
Hyperparameter & Distribution \\
\midrule
tree\_method & hist \\
n\_estimators & 1000 \\
early\_stopping\_rounds & 1000 \\
max\_depth & UniformInt[1, 11] \\
learning\_rate & LogUniform[1e-3, 0.7] \\
subsample & Uniform[0.5, 1] \\
colsample\_bytree & Uniform[0.5, 1] \\
colsample\_bylevel & Uniform[0.5, 1] \\
min\_child\_weight & LogUniformInt[1e-5, 100] \\ 
alpha & LogUniform[1e-5, 5] \\
lambda & LogUniform[1e-5, 5] \\
gamma & LogUniform[1e-5, 5] \\
\bottomrule
\end{tabular}
\end{table}

\begin{table}[tb]
\centering
\caption{Hyperparameter search space for RealMLP, adapted from \citet{holzmuller2024better}. Compared to the original search space, we make the option without label smoothing more likely since it optimizes a proper loss, and we insert a third option for weight decay.} \label{table:realmlp-hpo}
\begin{tabular}{ccc}
\toprule
Hyperparameter & Distribution \\
\midrule
Num.\ embedding type & Choice([None, PBLD, PL, PLR]) \\
Use scaling layer & Choice([True, False], p=[0.6, 0.4]) \\
Learning rate & LogUniform([2e-2, 3e-1]) \\
Dropout prob.\ & Choice([0.0, 0.15, 0.3], p=[0.3, 0.5, 0.2]) \\
Activation fct.\ & Choice([ReLU, SELU, Mish]) \\
Hidden layer sizes & Choice([[256, 256, 256], [64, 64, 64, 64, 64], [512]], p=[0.6, 0.2, 0.2]) \\
Weight decay & Choice([0.0, 2e-3, 2e-2]) \\
Num.\ emb.\ init std. & LogUniform([0.05, 0.5]) \\
Label smoothing $\varepsilon$ & Choice([0.0, 0.1]) \\
\bottomrule
\end{tabular}
\end{table}

\begin{table}[tb]
\centering
\caption{Default hyperparameters for MLP, adapted from \cite{gorishniy2021revisiting}.} \label{table:mlp-d}
\begin{tabular}{ccc}
\toprule
Hyperparameter & Distribution \\
\midrule
Number of layers & 4 \\
Hidden size & 256 \\
Activation function & ReLU \\
Learning rate & 1e-3 \\
Dropout & 0.0 \\
Weight decay & 0.0 \\
Optimizer & AdamW \\
Preprocessing & Quantile transform + one-hot encoding \\
Batch size & 256 \\
Number of epochs & 256 \\
Learning rate schedule & constant \\
\bottomrule
\end{tabular}
\end{table}

\begin{table}[tb]
\centering
\caption{Hyperparameter search space for MLP.} \label{table:mlp-hpo}
\begin{tabular}{ccc}
\toprule
Hyperparameter & Distribution \\
\midrule
Learning rate & LogUniform[1e-4, 1e-2] \\
Dropout & Choice([0.0, 0.1, 0.2, 0.3]) \\
Weight decay & Choice([0.0, 1e-5, 1e-4, 1e-3]) \\
\bottomrule
\end{tabular}
\end{table}

\subsection{Datasets}

We take the datasets from \citet{ye2024closer} and apply the following modifications:
\begin{itemize}
    \item Samples that contain missing values in numerical features are removed since not all methods natively support the handling of these values.
    \item Datasets that contain less than 1K samples after the previous steps are removed. The following datasets are removed: \texttt{analcatdata\_authorship}, \texttt{Pima\_Indians\_Diabetes\_Database}, \texttt{vehicle}, \texttt{mice\_protein\_expression}, and \texttt{eucalyptus}. The latter two datasets contained samples with missing numerical features. 
    \item Datasets with more than 100K samples are subsampled to 100K samples.
\end{itemize}

\end{appendices}

\end{document}